\newtheorem{theorem}{\bf Theorem}
\newtheorem{lemma}{\bf Lemma}
\newcommand{\compilehidecomments}{false}%HIDE comments
	\newcommand{\jinhang}[1]{}
	\newcommand{\mo}[1]{}
    \newcommand{\rev}[1]{}
    \newcommand{\adam}[1]{}
\newcommand{\jinhang}[1]{{\color{orange} [\text{Jinhang:} #1]}}
\newcommand{\mo}[1]{{\color{blue} [\text{MH:} #1]}}
\newcommand{\rev}[1]{{\color{red}#1}}
\newcommand{\adam}[1]{{\color{blue} [\text{AW:} #1]}}
\title{Adversarial Attacks on Online Learning to Rank with Click Feedback}
\author{
Jinhang Zuo$^{1\,2}$ \quad Zhiyao Zhang$^{3}$ \quad Zhiyong Wang$^{4}$ \quad \textbf{Shuai Li}$^{5}$ \\ 
\textbf{Mohammad Hajiesmaili}$^{1}$ \quad \textbf{Adam Wierman}$^{2}$\\
$^{1}$University of Massachusetts Amherst\,
$^{2}$California Institute of Technology\\
$^{3}$Southeast University\,
$^{4}$The Chinese University of Hong Kong\,
$^{5}$Shanghai Jiao Tong University 
}
\begin{document}

\maketitle

\begin{abstract}
Online learning to rank (OLTR) is a sequential decision-making problem where a learning agent selects an ordered list of items and receives feedback through user clicks. Although potential attacks against OLTR algorithms may cause serious losses in real-world applications, little is known about adversarial attacks on OLTR. This paper studies attack strategies against multiple variants of OLTR. Our first result provides an attack strategy against the UCB algorithm on classical stochastic bandits with binary feedback, which solves the key issues caused by bounded and discrete feedback that previous works can not handle. Building on this result, we design attack algorithms against UCB-based OLTR algorithms in position-based and cascade models. Finally, we propose a general attack strategy against any algorithm under the general click model. Each attack algorithm manipulates the learning agent into choosing the target attack item $T-o(T)$ times, incurring a cumulative cost of $o(T)$. Experiments on synthetic and real data further validate the effectiveness of our proposed attack algorithms.
\end{abstract}

\section{Introduction}\label{sec:intro}
% Background \& Motivation
% Challenge
% \begin{itemize}
%     \item Attack on (binary) click feedback: hard to take control on the attack value
%     \item Attack on PBM model: position-based estimator
%     \item Attack on cascade model: partial feedback
% \end{itemize}

% Results
% \begin{itemize}
%     \item Attack on (binary) click feedback
%     \item Attack on PBM-UCB
%     \item Attack on CascadeUCB
%     \item General attack on click feedback
%     \item Experiments on synthetic and real data
% \end{itemize}

Online learning to rank (OLTR) has been extensively studied~\cite{zoghi2017online,li2019online,lattimore2018toprank,lagree2016multiple,kveton2015cascading} as a sequential decision-making problem where, in each round, a learning agent presents a list of items to users and receives implicit feedback from user interactions. One of the most common feedback considered in literature is in the form of user clicks~\cite{zoghi2017online,lagree2016multiple,kveton2015cascading}. OLTR with such click feedback, can lead to major improvements over traditional supervised learning to rank methods~\cite{cao2007learning,liu2009learning,cakir2019deep}. However, there exists a security caveat that user-generated click feedback might be generated by malicious users with the goal of manipulating the learning agent. Understanding the vulnerability of OLTR under adversarial attacks plays an essential role in developing effective defense mechanisms for trustworthy OLTR systems. 

There has been a surge of interest in adversarial attacks on multi-armed bandits~\cite{jun2018adversarial,liu2019data,xu2021observation}. For example, \cite{jun2018adversarial} shows that, for stochastic bandits, it is possible to manipulate the bandit algorithm into pulling a target arm very often with sublinear cumulative cost.
Though OLTR generally follows the bandit formulation, it differs from stochastic bandits in the action space and feedback model. 
More specifically, OLTR chooses a list of $K$ out of $L$ arms, instead of a single arm, to play in each round; the realized rewards of the chosen arms are usually censored by a click model, e.g., position-based~\cite{lagree2016multiple} or cascade model~\cite{kveton2015cascading}. Thus, it is nontrivial to design efficient adversarial attacks on the censored feedback of the chosen arms for OLTR. 

Moreover, previous works~\cite{jun2018adversarial,liu2019data} can only handle unbounded and continuous reward feedback, while the (binary) click feedback of OLTR is bounded and discrete. Such binary feedback brings new challenges to the attack design against OLTR algorithms. 
Since the post-attack feedback must also be binary, it introduces a new problem of deciding whether to attack when the output attack value from previous algorithms is between $0$ and $1$: a simple rounding up might be costly, while skipping the attack may lead to undesired non-target arm pulls.
Furthermore, the attack value computed by previous attack algorithms can be larger than $1$, which is higher than the maximum attack value in the click model. In other words, in the bounded and discrete model, it is impossible to always find a feasible attack value to ensure the required conditions for their theoretical guarantees.

\begin{table}[t]
    \centering
	\caption{Summary of the settings and proposed attack strategies$^\dagger$}
	\label{tab:algorithms}
\resizebox{0.97\columnwidth}{!}{
\begin{threeparttable}
	\begin{tabular}{cccc}
		\toprule
		\textbf{Setting}&\textbf{Attack against}& $N_L(T)$ & $\lim C(T) / \log T$\\
		\midrule
		$L$-armed bandits & \textit{UCB}~\cite{bubeck2012regret} & $T - O\left((L-1)\left(\frac{1}{\Delta_0^2} \log T\right)\right)$ & $O\left(\sum_{a<L}\frac{\Delta_a + \Delta_0}{\Delta_0^2} \right)$ \\
	Position-based model &\textit{PBM-UCB}~\cite{lagree2016multiple} & $T - O\left((L-K)\left(\frac{1+\epsilon}{\kappa^2_{K}\Delta_0^2} \log T\right)\right)$ & $O\left(\sum_{a<L}\frac{(1+\epsilon)(\Delta_a + \Delta_0)}{\kappa^2_{M}\Delta_0^2} \right)$ \\
		Cascade model &\textit{CascadeUCB}~\cite{kveton2015cascading} & $T - O\left((L-K)\left(\frac{1}{p^{*}\Delta_0^2} \log T\right)\right)$ & $O\left(\sum_{a<L}\frac{\Delta_a + \Delta_0}{\Delta_0^2} \right)$  \\
      General model & \textit{Arbitrary} & $T - O(\log T)$ & $O\left(\sum_{a<L}(\Delta_a + 4\beta(1)) \right)$  \\
		\bottomrule
	\end{tabular}
   \begin{tablenotes}[para, online,flushleft]
	\footnotesize%\smallskip
	\item[]\hspace*{-\fontdimen2\font}$^\dagger$ $\Delta_0$: parameter of the attack algorithm; $\Delta_a$: mean gap between arm $a$ and $L$; $\beta$: a decreasing function in \Cref{sec:attack_UCB} \end{tablenotes}
 \end{threeparttable}
}
\end{table}
\textbf{Contributions.}
In this paper, we propose the first study of adversarial attacks on OLTR with click feedback, aiming to overcome the new issues raised by the OLTR click model as well as the binary feedback. \Cref{tab:algorithms} summarizes our proposed attack algorithms with their theoretical guarantees. Since the click feedback itself complicates the attack design, we first consider adversarial attacks on stochastic $L$-armed bandits with Bernoulli rewards where the feedback from the chosen arm is always binary. 
We propose an attack algorithm that can mislead the well-known UCB algorhtm~\cite{bubeck2012regret} to pull the target arm $L$ for $N_L(T)$ times in $T$ rounds, with a cumulative cost $C(T)$ in the order of $\log T$ asymptotically. Based on it, we study the two most popular click models of OLTR, position-based model~\cite{lagree2016multiple} and cascade model~\cite{kveton2015cascading}, and propose attack strategies against UCB-type OLTR algorithms. Their cumulative costs depend on $\kappa_K$ and $p^*$, which are instance-dependent parameters of the position-based and cascade models, respectively.
Lastly, we introduce the threat model for OLTR with a general click model and design an attack algorithm that can misguide arbitrary OLTR algorithms using click feedback.
Our technical contributions are summarized as follows.
% \mo{the following items are mainly duplicate with the paragraph above with a few exceptions. what about removing it and then break the paragraph above into two paragraphs with emphasis on the statement of the result and algorithms in the 1st one, and 2nd one on the technical challenges and how we get those results?}
\begin{enumerate}
    \item We propose the new idea of \textit{conservative estimation} of the target arm for attacking UCB in stochastic bandits with binary feedback, which resolves the issues caused by the bounded and discrete requirements of the post-attack feedback. It is also the backbone of other attack strategies in more complicated OLTR settings.  
    \item The PBM-UCB algorithm uses \textit{position bias-corrected counters} other than simple click counters to compute UCB indices; we provide the theoretical analysis of our attack algorithm against PBM-UCB by carefully treating these unusual counters.
    \item The \textit{partial feedback} of OLTR with the cascade click model brings a new challenge to the attack design. We provide a new \textit{regret-based analysis} of our attack algorithm against CascadeUCB without suffering from the partial feedback issue.
    \item We devise a general attack strategy based on a new \textit{probabilistic attack design}. It can successfully attack arbitrary OLTR algorithms without knowing the details of the algorithm. 
\end{enumerate}
We also conduct experiments on both synthetic and real-world data to evaluate our proposed attack strategies. Experimental results show that they can effectively attack the corresponding OLTR algorithms with less costs comparing to other baselines. Due to space constraints, proofs and empirical results are included in the appendix.

\textbf{Related Work}.
Online learning to rank with different feedback models has attracted much attention in recent years. Though there exist other types of feedback models such as top-$K$ feedback~\cite{chaudhuri2017online}, click feedback has been widely used in literature. \cite{lagree2016multiple,komiyama2017position,ermis2020learning} consider the position-based model (PBM), where each position of the item list has an examination probability known or unknown by the learning agent. The cascade model in \cite{kveton2015cascading,kveton2015combinatorial,zong2016cascading,li2016contextual,zong2016cascading,vial2022minimax} considers the setting where the user would check the recommended items sequentially and stop at the first clicked one; all items after the clicked item will not be examined. The dependent click model (DCM) is a generalization of the cascade model where the user may click on multiple items~\cite{katariya2016dcm,liu2018contextual}. There are also general click models~\cite{zoghi2017online,li2019online,lattimore2018toprank} that can cover some of the previous click models.
In this paper, we mainly focus on attack design on PBM and cascade models; both of them adopt bounded and discrete click feedback thus require a new design other than previous works like ~\cite{jun2018adversarial} that can only handle unbounded and continuous feedback.

Adversarial attacks on different types of multi-armed bandit problems have been studied recently~\cite{jun2018adversarial,liu2019data,garcelon2020adversarial,ma2023adversarial,wang2022linear}. \cite{jun2018adversarial} proposes the first study of adversarial attacks on the classical stochastic bandit problem. It designs effective attack strategies against the $\epsilon$-Greedy and UCB algorithms. \cite{liu2019data} extends it to a more general setting where the algorithm of the learning agent can be unknown.
\cite{garcelon2020adversarial} studies adversarial attacks on linear contextual bandits where the adversarial modifications can be added to either rewards or contexts. To the best of our knowledge, we are the first to study adversarial attacks on OLTR where, in addition to the challenges raised by click feedback, the combinatorial action space and the censored feedback of OLTR make it non-trivial to design efficient attack strategies.
\section{Preliminaries}\label{sec:pre}
In this section, we briefly discuss the three problem settings we consider in this paper.
% Learning algorithms for different settings will be introduced in later sections.

\textbf{Stochastic $L$-armed bandit}. We consider an arm set $[L] = \{1,2,\cdots,L\}$, with $\mu_i$ as the expected reward 
 of arm $i$. Without loss of generality, we assume $\mu_1 \ge \mu_2 \ge \cdots \ge \mu_L$. In round $t$, the player chooses an arm $a_t \in [L]$ to play and receives a reward $r^0_{t}$ as feedback. In the click feedback setting, the realized reward $r^0_t\in \{0,1\}$ of arm $a_t$ is sampled from a Bernoulli distribution with expectation $\mu_{a_t}$. The player aims to find an optimal policy to maximize the long-term cumulative reward.

\textbf{OLTR with position-based model \cite{lagree2016multiple}}. This setting also considers the item (arm) set $[L]$ where $\mu_i$ represents the click probability of item $i$. However, in OLTR with the position-based model (PBM), in each round $t$, the player chooses an ordered list of $K$ items, $\bm{a}_t=(a_{1,t},\cdots, a_{K,t})$, with known examination probability $\kappa_k$ for the $k$-th position in the list (assuming $\kappa_1 \ge \cdots \ge \kappa_K$). The play then observes the click feedback of the chosen list from the user, denoted as $\bm{r}^0_t = (r^0_{1,t},\cdots,r^0_{K,t})\in \{0,1\}^{K}$, where $r^0_{i,t}$ is sampled from a Bernoulli distribution with expectation $\kappa_{i} \mu_{a_{i,t}}$. The reward obtained by the player is the sum of the clicks, i.e., $\sum_{k=1}^{K} r^0_{k,t}$. The goal of the player is to find an optimal policy that can maximize the long-term cumulative user clicks.

\textbf{OLTR with cascade model \cite{kveton2015cascading}}. 
Here, we consider the same item set $[L]$ as that in the PBM model. For OLTR with cascade model, in each round $t$, the player chooses an ordered list of $K$ items, ${\bm{a}_t=(a_{1,t},\cdots, a_{K,t})}$, for the user. The user then checks the list from $a_{1,t}$ to $a_{K,t}$, with probability $\mu_{a_{k,t}}$ to click the $k$-th item. She immediately stops at the first clicked item, and returns the click result back to the player. We denote the position of the first clicked item as $\textbf{C}_t$ ($\textbf{C}_t = \infty$ if no item was clicked). The click feedback of the player is $\bm{r}^0_t = (r^0_{1,t},\cdots,r^0_{K,t})\in \{0,1\}^{K}$, where only $r^0_{\textbf{C}_t,t} = 1$ and $r^0_{k,t} = 0$ for all $k\neq \textbf{C}_t$. The reward obtained by the player is again the sum of user clicks $\sum_{k=1}^{K} r^0_{k,t}$, but in the cascade model, it is at most $1$. The goal of the player is also to find an optimal policy that can maximize the long-term cumulative user clicks.

\section{Attacks on Stochastic L-armed Bandits with Binary Feedback}\label{sec:binary}
As mentioned in the introduction, one main challenge of attacking OLTR algorithms comes from the binary click feedback: such binary feedback limits the possible actions of the attacker since they need to ensure the post-attack reward feedback is still valid (binary). This is a common issue for all OLTR with click feedback. Hence, in this section, we focus on adversarial attacks on the $L$-armed bandit problem with binary feedback. We propose an attack algorithm against the UCB algorithm, which is the backbone of the attack strategies for more complicated OLTR settings.
\subsection{Threat Model}\label{sec:binary_model}
% \begin{enumerate}
%     \item Bandit algorithm chooses arm $a_t$
%     \item User generate pre-attack feedback $r^0_t \in \{0,1\}$ 
%     \item Attacker observes $a_t, r^0_t$, and decides post-attack feedback $r_t\in \{0,1\}$
%     \item Bandit algorithm receives $r_t$ and then chooses the next arm $a_{t+1}$
% \end{enumerate}
We first introduce the threat model for the $L$-armed bandit problem with binary feedback. In each round $t$, the player chooses an arm $a_t \in [L]$ to play. The environment generates the pre-attack reward feedback $r^0_t \in \{0,1\}$ based on a Bernoulli distribution with mean $\mu_{a_t}$. The attacker then observes $a_t, r^0_t$, and decides the post-attack feedback $r_t\in \{0,1\}$. The player only receives $r_t$ as the feedback and  uses that to decide the next arm to pull, $a_{t+1}$, for round $t+1$.
Without loss of generality, we assume arm $L$ is a sub-optimal \textit{target} arm.
The attacker's goal is to misguide the player to pull the target arm $L$ very often while using small attack costs.
Let $N_i(t)$ denote the number of pulls of arm $i$ up to round $t$. We say the attack is successful after $T$ rounds if $N_L(T) = T - o(T)$ in expectation or with high probability, while the cumulative attack cost $C(T) = \sum_{t=1}^{T}|r_t - r^0_t| = o(T)$.

% \mo{the notations and presentation of this section is too similar to NeurIPS'18 paper. I am changing slightly but we may need to use different notations as well.} 
\subsection{Attack Algorithm against UCB}\label{sec:attack_UCB}
For a better illustration of the attack strategy, we first define the following auxiliary notations. Let $\tau_{i}(t):=\{s:s\le t, a_s = i\}$ be the set of rounds up to $t$ when arm $i$ is played. We denote the pre-attack average reward of arm $i$ up to round $t$ by $\hat{\mu}^0_{i}(t) := N_i(t)^{-1} \sum_{s\in \tau_i(t)} r^0_{s}$. Last, let 
$\hat{\mu}_{i}(t) := N_i(t)^{-1} \sum_{s\in \tau_i(t)} r_{s}$ be the post-attack average reward of arm $i$ up to round $t$.

% \mo{introduce $\alpha$ and $\psi$}
As in \cite{jun2018adversarial}, we consider attacks against the $(\alpha,\psi)$-UCB algorithm from \cite{bubeck2012regret}, where $\alpha=4.5$ and $\psi: \lambda \mapsto \lambda^2/8$ since Bernoulli random variables are $1/4$-sub-Gaussian.
% The arm-selection rule of the player is:
% \begin{equation*}
%     a_{t} = \begin{cases}
%         t, &\text{if $t < L$}\\
%         \text{argmax}_{i} \left\{\hat{\mu}_{i}(t-1) + \frac{3}{2} \sqrt{\frac{\log t}{N_{i}(t-1)}}\right\}, &\text{otherwise.}\\
%     \end{cases}
% \end{equation*}
 The original attack algorithm in \cite{jun2018adversarial} calculates an attack value $\alpha_t$ for round $t$ such that 
\begin{equation}\label{eq:old_alpha}
    \hat{\mu}_{a_t}(t) \le \hat{\mu}_L(t-1) - 2\beta(N_L(t-1)) - \Delta_0,
\end{equation}
where $\beta(N):= \sqrt{\frac{1}{2N}\log \frac{\pi^2LN^2}{3\delta}}$; $\Delta_0 > 0$ and $\delta >0$ are the parameters of the attack algorithm. The attacker then gives the post-attack reward $r_t = r^0_t - \alpha_t$ back to the player.

However, this attack design only works when the reward space is unbounded and continuous, while in the threat model with binary feedback, the Bernoulli rewards are \textit{discrete} and \textit{bounded}. There are two issues raised by the discrete and bounded requirements. First, the calculated attack value $\alpha_t$ is a real number, which may not make the post-attack reward feedback $r_t$ to be valid (binary). Second, in order to ensure that \cref{eq:old_alpha} is true, the calculated $\alpha_t$ might be larger than 1, which is higher than the maximum attack value in our threat model. In other words, it is impossible to let \cref{eq:old_alpha} hold for all rounds, while such a guarantee was essential for the theoretical analysis in \cite{jun2018adversarial}.
\begin{algorithm}[t]
 \caption{Attacks against the UCB algorithm on stochastic bandits with binary feedback}\label{alg:UCB}
 \begin{algorithmic}[1]
 \STATE \textbf{Initialization}: $h_a(0) = 1$ for all $a \in [L]$
 \FOR{$t = 1,2,3,\dots$}
    \STATE Observe $a_t, r_t^0$
    \IF {$a_t \neq L$}
        \STATE Calculate $\gamma_t, \tilde{\gamma}_t$ according to \cref{eq:gamma_t,eq:ti_gamma_t}
        \IF{$\gamma_t \le r_t^0$}
            \STATE $\alpha_t =  \lceil \gamma_t  \rceil$, $h_{a_t}(t) = t$
        \ELSE
            \STATE $\alpha_t =  \lceil \tilde{\gamma}_t  \rceil$, $h_{a_t}(t)= h_{a_t}(t-1)$
        \ENDIF
    \ENDIF
    \STATE Return $r_t = r_t^0 - \alpha_t$;  update $h_a(t)= h_a(t-1)$ for all $a\neq a_t$
 \ENDFOR
 \end{algorithmic} 
\end{algorithm}
To overcome these issues, we propose a new attack algorithm against UCB on $L$-armed bandits with binary feedback. It is described in \cref{alg:UCB}. It maintains timestamp
% \mo{can we label this slot with attack timestamp or something like that? Is this the last slot that the attacker corrupts the reward?  }
% \jinhang{It is the last slot that $\gamma_t \le r_t^0$ holds; it is possible that $\gamma_t < r_t^0$ but we still corrupt the reward based on $\tilde{\gamma}_t$.}
$h_a(t)$ for each arm $a$. In round $t$, if the arm pulled by the player, $a_t$, is not the target arm $L$, it first checks the condition $\gamma_t \le r_t^0$ (Line 6), with $\gamma_t$ computed as
\begin{equation}\label{eq:gamma_t}\textstyle
\gamma_t = \left[N_{a_t}(t) \hat{\mu}^0_{a_t}(t) - \sum_{s\in \tau_{a_t}(t-1)} \alpha_s - N_{a_t}(t) \left[\underline{\mu}_L(t) - \Delta_0\right]_{+}\right]_{+},
\end{equation}
where $[z]_{+} = \max(0,z)$ and $\underline{\mu}_L(t) := \hat{\mu}_L(t) - 2\beta(N_L(t))$. In fact, condition $\gamma_t \le r_t^0$ is equivalent to checking whether there exists a feasible $\alpha_t$ to ensure \cref{eq:old_alpha} holds: if $\gamma_t \le r_t^0$, with $\alpha_t$ set to be $\lceil \gamma_t  \rceil$ (Line 7), \cref{eq:old_alpha} will hold in round $t$.
Thus, with a similar analysis to \cite{jun2018adversarial}, we can derive an upper bound of $N_{a_t}(t)$ and prove the success of the attack up to round $t$. The algorithm also updates the timestamp $h_{a_t}(t) = t$ for $a_t$ (line 7). If $\gamma_t > r_t^0$, it indicates that there is no feasible $\alpha_t$ that can ensure \cref{eq:old_alpha}. Instead, the algorithm sets $\alpha_t =  \lceil \tilde{\gamma}_t  \rceil$ with $\tilde{\gamma}_t$ computed as 
\begin{equation}\label{eq:ti_gamma_t}\textstyle
\tilde{\gamma}_t = \left[N_{a_t}(t) \hat{\mu}^0_{a_t}(t) - \sum_{s\in \tau_{a_t}(t-1)} \alpha_s - N_{a_t}(t) \left[\underline{\mu}_L(h_{a_t}(t-1)) - \Delta_0\right]_{+}\right]_{+},
\end{equation}
where $h_{a_t}(t-1)$ records the last round that \cref{eq:old_alpha} was satisfied. We can prove such an $\alpha_t$ is always feasible ($\alpha_t \le r^0_t$) and it can ensure that 
\begin{equation}\label{eq:new_alpha}
    \hat{\mu}_{a_t}(t) \le \hat{\mu}_L(h_{a_t}(t-1)) - 2\beta(N_L(h_{a_t}(t-1))) - \Delta_0.
\end{equation}
This new inequality always holds for all rounds, which helps guarantee the success of the attack.
% \mo{is it possible to have some interpretation of $\gamma_t$ and $\hat{\gamma}_t$? since $r_t^0$ is either 0 or 1, I cannot understand why you need multiple if clause. the only thing matters is when $r_t^0=1$ and in this case you may choose to flip it to 0? if so, what is the rationale behind lines 8 and 9?}\jinhang{The logic here is we first need to check whether there exists a feasible $\alpha_t$ to ensure Eq. (1) and it is equivalent to check $\gamma_t \le r^0_{t}$; if yes, just use that $\alpha_t$; if no, we need to choose the $\alpha_t$ that can ensure Eq. (4) by calculating $\tilde{\gamma}_t$ and that $\alpha_t$ would always be valid. One example about why $\tilde{\gamma}_t$ matters: it is possible that $\gamma_t>1, r^0_{t}=1$ but $\tilde{\gamma}_t = 0$, so it wound not attack though $r^0_{t}=1$.} \mo{maybe we can add part of this explanation to the main text. I mean the first part not my 2nd question.}

\textbf{Remark 1}.
Compared with \cref{eq:old_alpha}, \cref{eq:new_alpha} uses a more conservative lower bound of $\mu_L$, $\underline{\mu}_L(h_{a_t}(t-1))$, instead of $\underline{\mu}_L(t)$, on the right hand side of inequality. We call this \textit{conservative estimation} of the target arm $L$ with respect to arm $a_t$, where the estimated lower bound of $L$ will only be updated when there exists feasible $\alpha_t$ to ensure \cref{eq:old_alpha}.
We use an inductive proof to show that there always exists $\alpha_t$ such that \cref{eq:new_alpha} holds while keeping $r_t$ valid (binary). Hence, the conservative estimation solves the issues introduced by binary feedback. This is then the basis for attack algorithms for OLTR with click feedback presented later in the paper. 

\subsection{Analysis}\label{sec:ucb_analysis}
We first show that our attack algorithm always returns valid binary feedback to the player.
\begin{lemma}\label{lemma:binary}
The post-attack feedback of \Cref{alg:UCB} is always valid, i.e., $r_t \in \{0,1\}$ for any $t$.
\end{lemma}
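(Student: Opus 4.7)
The goal is to show that for every round $t$, the post-attack reward $r_t = r_t^0 - \alpha_t$ lies in $\{0,1\}$. Since $\gamma_t, \tilde\gamma_t$ are defined using outer $[\cdot]_+$ they are nonnegative, hence $\alpha_t = \lceil\gamma_t\rceil$ or $\alpha_t=\lceil\tilde\gamma_t\rceil$ is a nonnegative integer, so it suffices to prove $\alpha_t \leq r_t^0$ in every case (this also forces $r_t^0 \in \{0,1\} \Rightarrow \alpha_t \in \{0,1\}$). The plan is a case analysis on the branches of \Cref{alg:UCB}, with an inductive argument carrying a running invariant used to control the \emph{else} branch.

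My plan is as follows. First, the branch $a_t = L$ is trivial: no attack is performed so $\alpha_t = 0$ and $r_t = r_t^0 \in \{0,1\}$. Second, if $a_t \neq L$ and $\gamma_t \leq r_t^0$, then $\alpha_t = \lceil\gamma_t\rceil$; because $r_t^0$ is an integer and $\gamma_t \leq r_t^0$, we have $\lceil\gamma_t\rceil \leq r_t^0$, done. The real work is the else branch, where $\alpha_t = \lceil\tilde\gamma_t\rceil$; I need to establish $\tilde\gamma_t \leq r_t^0$.

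To handle this branch, I would prove by induction on $t$ the invariant
\begin{equation*}
\hat\mu_a(t) \;\leq\; \bigl[\underline{\mu}_L(h_a(t)) - \Delta_0\bigr]_{+} \qquad \text{for every } a \neq L \text{ with } N_a(t) \geq 1,
\end{equation*}
together with the claim $\alpha_t \in \{0,1\}$ and $\alpha_t \leq r_t^0$ at round $t$. The invariant is maintained in either branch: if $\gamma_t \leq r_t^0$ then $\alpha_t \geq \gamma_t$ forces $\hat\mu_{a_t}(t) \leq [\underline{\mu}_L(t) - \Delta_0]_+$ and we simultaneously set $h_{a_t}(t)=t$; if $\gamma_t > r_t^0$ then $\alpha_t \geq \tilde\gamma_t$ forces $\hat\mu_{a_t}(t) \leq [\underline{\mu}_L(h_{a_t}(t-1)) - \Delta_0]_+$ and we keep $h_{a_t}(t)=h_{a_t}(t-1)$. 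For the first-ever pull of $a_t$ one has $\gamma_t = [r_t^0 - [\underline\mu_L(t)-\Delta_0]_+]_+ \leq r_t^0$, so the algorithm cannot enter the else branch at that round, which gives a clean base case for the induction.

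For the inductive step in the else branch, let $t' = \max\tau_{a_t}(t-1)$ (which exists by the previous paragraph). Since only rounds in which $a_t$ is played can change $h_{a_t}$, we have $h_{a_t}(t-1) = h_{a_t}(t')$. Using the identity
\begin{equation*}
N_{a_t}(t)\hat\mu^0_{a_t}(t) - \!\!\sum_{s\in\tau_{a_t}(t-1)}\!\!\alpha_s \;=\; N_{a_t}(t')\hat\mu_{a_t}(t') + r_t^0,
\end{equation*}
and plugging the inductive invariant $N_{a_t}(t')\hat\mu_{a_t}(t') \leq N_{a_t}(t')[\underline\mu_L(h_{a_t}(t-1)) - \Delta_0]_+$ into the definition of $\tilde\gamma_t$ (and using $N_{a_t}(t) = N_{a_t}(t')+1$), the two $[\underline\mu_L(\cdot)-\Delta_0]_+$ terms combine to give
\begin{equation*}
\tilde\gamma_t \;\leq\; \bigl[r_t^0 - [\underline\mu_L(h_{a_t}(t-1)) - \Delta_0]_+\bigr]_{+} \;\leq\; r_t^0,
\end{equation*}
so $\alpha_t = \lceil\tilde\gamma_t\rceil \leq r_t^0$ since $r_t^0$ is an integer. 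Combined with $\alpha_t \geq 0$ this yields $r_t \in \{0,1\}$, completing the induction.

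The main obstacle I expect is keeping the bookkeeping of $h_{a_t}$ straight across rounds when other arms are played in between, and correctly handling the outer $[\cdot]_+$ operators so the cancellation in the $\tilde\gamma_t$ bound really goes through (in particular, verifying that the crude bound $\hat\mu_{a_t}(t') \leq [\underline\mu_L(h_{a_t}(t-1))-\Delta_0]_+$ is the right strengthening of \cref{eq:new_alpha} to carry in the induction). Everything else reduces to integer ceiling arithmetic and the definitions of $\gamma_t,\tilde\gamma_t$.
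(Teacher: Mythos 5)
Your proof is correct and follows essentially the same route as the paper's: both argue by induction over the consecutive pulls of each fixed non-target arm, using the telescoping identity between the current pull $t$ and the previous pull $t'$, with the base case coming from the fact that the first pull of any arm necessarily lands in the if-branch. The only cosmetic difference is that you carry the invariant $\hat\mu_a(t)\le\left[\underline{\mu}_L(h_a(t))-\Delta_0\right]_{+}$ explicitly, whereas the paper phrases the same induction as a recursion bounding $\tilde\gamma_{t_{a,j+1}}$ in terms of $\gamma_{t_{a,j}}$ or $\tilde\gamma_{t_{a,j}}$ according to which branch was taken at round $t_{a,j}$; the two formulations are equivalent.
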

The proof of \Cref{lemma:binary}, which uses an inductive analysis on $\gamma_t, \tilde{\gamma}_t$, can be found in the appendix.

% {\color{orange}
% \textit{Proof sketch}. 
% When $\gamma_t \le r_t^0$, it is easy to check $\alpha_t = \lceil \gamma_t  \rceil \le r_t^0$, thus $r_t \in \{0,1\}$. When $\gamma_t > r_t^0$, we can prove $\tilde{\gamma}_t \leq r_t^0$ for any $a_t$ by induction. 
% Consider any arm $a \neq L$.   
% We denote $t_{a,j}$ as the $j$-th time that the UCB algorithm played $a$. Since the UCB algorithm plays each arm one time in the beginning, we have $\tilde{\gamma}_{t_{a,1}} \le \left[N_{a}(t_{a,1}) \hat{\mu}^0_{a}(t_{a,1}) \right]_{+} \le r^0_{t_{a,1}}$. Next, we want to show that if $\tilde{\gamma}_{t_{a,j}} \le r^0_{t_{a,j}}$, $\tilde{\gamma}_{t_{a,j+1}} \le r^0_{t_{a,j+1}}$. We consider two cases for $t_{a,j}$: 1) $\gamma_{t_{a,j}} \le r_{t_{a,j}}^0$; 2) $\gamma_{t_{a,j}} > r_{t_{a,j}}^0$. In case 1, we can bound $\tilde{\gamma}_{t_{a,j+1}} \le r^0_{t_{a,j+1}}$ using $\gamma_{t_{a,j}}$; in case 2, we  bound $\tilde{\gamma}_{t_{a,j+1}} \le r^0_{t_{a,j+1}}$ using $\tilde{\gamma}_{t_{a,j}}$. Since $\tilde{\gamma}_{t_{a,1}} \le r^0_{t_{a,1}}$, by induction, we have $\tilde{\gamma}_{t_{a,j}} \le r^0_{t_{a,j}}$ for any $a, j$, which concludes the proof.
% }

Define $\Delta_a := \mu_a - \mu_L$.
We give the following theorem to show the successful attack of \Cref{alg:UCB}. 
\begin{theorem}\label{thm:binary}
Suppose $T \ge L$ and $\delta \le 1/2$. With probability at least $1 - \delta$, \Cref{alg:UCB} misguides the UCB algorithm to choose the target arm $L$ at least
$T - (L-1)\left(1+\frac{3}{\Delta_0^2} \log T\right)$ rounds, using a a cumulative attack cost at most
\begin{equation*}\textstyle
C(T) \le \left(1+\frac{3}{\Delta_0^2} \log T\right) \sum_{a<L}\left(\Delta_a + \Delta_0 + 4\beta\left(1+\frac{3}{\Delta_0^2} \log h_a(T)\right)\right).
\end{equation*}
As $T$ goes to infinity, we have
\begin{equation*}
\lim_{T \rightarrow \infty} \frac{C(T)}{\log T}
\le O\left(\sum_{a<K}\frac{\Delta_a + \Delta_0}{\Delta_0^2}\right).
\end{equation*}
\end{theorem}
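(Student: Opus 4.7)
The plan is to first condition on a standard concentration event and then translate the algorithmic invariant~\eqref{eq:new_alpha} into a pulling bound via UCB's selection rule. Define $E := \{\,|\hat{\mu}^0_a(t) - \mu_a| \le \beta(N_a(t)) \text{ for all } a\in[L],\,t\ge 1\,\}$. A standard time-uniform Hoeffding argument together with the $\pi^2/3$ factor built into $\beta$ (which absorbs the $\sum_N 1/N^2$ union bound) gives $P(E)\ge 1-\delta$, and I would work on $E$ throughout. Since arm $L$ is never attacked, $\hat{\mu}_L=\hat{\mu}^0_L$, so on $E$ the UCB index of $L$ satisfies $U_L(s)\ge \mu_L$ at every round $s$ (using that the UCB radius with $\alpha=4.5$ dominates $\beta$ for $\delta\le 1/2$).

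Next I would convert~\eqref{eq:new_alpha} into a bound on $N_a(T)$ for each $a<L$. By \Cref{lemma:binary} the post-attack feedback is always binary, and by construction~\eqref{eq:new_alpha} is in force at the end of every round with $a_t\ne L$; on $E$ its right-hand side is at most $\mu_L-\Delta_0$, so $\hat{\mu}_{a_t}(t)\le \mu_L-\Delta_0$, and this bound persists for $\hat{\mu}_a$ until the next pull of $a$. If UCB then selects $a\ne L$ at a later round $s$, the UCB comparison forces $U_a(s-1)\ge U_L(s-1)\ge \mu_L$, and combined with $\hat{\mu}_a(s-1)\le \mu_L-\Delta_0$ this implies that the UCB radius of $a$ at round $s$ is at least $\Delta_0$, i.e.\ $N_a(s-1)\le \frac{3}{\Delta_0^2}\log T$. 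Accounting for the single initial pull required for UCB to assign a finite index yields $N_a(T)\le 1+\frac{3}{\Delta_0^2}\log T$, and summing over $a<L$ gives the claimed lower bound on $N_L(T)$.

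Finally, let $S_a(T):=\sum_{s\in\tau_a(T)}\alpha_s$, so $C(T)=\sum_{a<L}S_a(T)$. By the recursive definitions~\eqref{eq:gamma_t}-\eqref{eq:ti_gamma_t}, both $\gamma_s$ and $\tilde{\gamma}_s$ are exactly the deficit needed (given the running sum $S_a(s-1)$) to enforce the appropriate snapshot of~\eqref{eq:new_alpha}, and the ceiling contributes at most $+1$ per pull. Unrolling the recursion while tracking the timestamp $h_a(\cdot)$ gives the per-arm upper bound $S_a(T)\le N_a(T)\,\hat{\mu}^0_a(T) - N_a(T)\bigl[\underline{\mu}_L(h_a(T))-\Delta_0\bigr]_+ + N_a(T)$; on $E$, substituting $\hat{\mu}^0_a(T)\le \mu_a+\beta(N_a(T))$ and $\underline{\mu}_L(h_a(T))\ge \mu_L-3\beta(N_L(h_a(T)))$ converts this into $S_a(T)\le N_a(T)\bigl(\Delta_a+\Delta_0+4\beta(1+\frac{3}{\Delta_0^2}\log h_a(T))\bigr)$. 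Plugging in $N_a(T)\le 1+\frac{3}{\Delta_0^2}\log T$ produces the stated finite-$T$ cost bound, and since $\beta(1+\frac{3}{\Delta_0^2}\log T)=O(\sqrt{\log\log T/\log T})\to 0$ the asymptotic bound $\lim_{T\to\infty} C(T)/\log T=O(\sum_{a<L}(\Delta_a+\Delta_0)/\Delta_0^2)$ follows. The hard part is precisely this inductive cost bound: because the conservative branch ($\gamma_t>r^0_t$) compares against the frozen snapshot $\underline{\mu}_L(h_{a_t}(t-1))$ rather than $\underline{\mu}_L(t)$, the induction must handle the two branches separately and use the invariant $h_a(t)=h_a(t-1)$ in the conservative branch, together with the monotonicity of~\eqref{eq:gamma_t}-\eqref{eq:ti_gamma_t} in $S_a(s-1)$, to prevent the ceilings from accumulating beyond $O(N_a(T))$ extra cost.
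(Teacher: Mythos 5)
Your overall architecture matches the paper's: condition on the time-uniform Hoeffding event $E$, convert the invariant \cref{eq:new_alpha} into the pull bound $N_a(T)\le 1+\frac{3}{\Delta_0^2}\log T$ via the UCB selection inequality (the paper's \cref{lemma:time}), and bound the cost by telescoping the recursions \cref{eq:gamma_t,eq:ti_gamma_t} (the paper's \cref{lemma:cost}). However, there is a genuine gap in your asymptotic step. The finite-$T$ cost bound you derive contains $4\beta\bigl(1+\frac{3}{\Delta_0^2}\log h_a(T)\bigr)$, with $h_a(T)$ --- not $T$ --- inside the logarithm, yet you conclude the limit by observing that $\beta\bigl(1+\frac{3}{\Delta_0^2}\log T\bigr)\to 0$. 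These are not interchangeable: $h_a(t)$ is updated only in rounds where the non-conservative branch $\gamma_t\le r^0_t$ fires, and nothing in your argument rules out $h_a(T)$ remaining bounded, in which case the $\beta$ term contributes a non-vanishing constant of order $\beta(O(1))$ to $C(T)/\log T$ (exactly the degradation that appears in \cref{thm:general}). The paper closes this hole in the second half of the proof of \cref{lemma:cost} by showing that $\gamma_t\le r^0_t$ holds for all sufficiently large $t$ (a two-case analysis on the sign of $\underline{\mu}_L(t)-\Delta_0$), hence $h_a(t)=t$ eventually and $\beta(N_L(h_a(t)))\to 0$. You need this step, or a substitute for it, before the asymptotic bound follows; it is the main point of the conservative-estimation analysis.

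A secondary, quantitative slip: you charge the ceilings $+1$ per pull, arriving at an additive $+N_a(T)$, and then silently drop it from the final per-arm bound $N_a(T)\bigl(\Delta_a+\Delta_0+4\beta(\cdot)\bigr)$; an extra $N_a(T)$ would in fact add a further $3/\Delta_0^2$ per arm to the asymptotic constant, which is not absorbed by $O\bigl((\Delta_a+\Delta_0)/\Delta_0^2\bigr)$ when $\Delta_0<1$. The resolution is that the ceilings cost only $+1$ in total per arm: whenever $\gamma_t>0$, the recursion sets $\sum_{s\in\tau_a(t)}\alpha_s\le \mathrm{Target}_t+1$ rather than adding $1$ on top of the previous accumulated cost, because $\gamma_t$ is defined as the deficit relative to the already-rounded running sum $\sum_{s\in\tau_a(t-1)}\alpha_s$. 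This is how the paper's \cref{lemma:cost} obtains its single additive $+1$.
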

Compared with Theorem 2 in \cite{jun2018adversarial}, the $\beta$ term in our cost bound depends on $\log h_a(T)$ instead of $\log T$. Since $\beta$ is a decreasing function and $h_a(T) \le T$, our finite-time cost bound can be larger than that in \cite{jun2018adversarial}. However, our asymptotic analysis of the cost suggests that when $T$ is large, such difference becomes negligible. 
Notice that the attack algorithm in \cite{jun2018adversarial} does not have any theoretical guarantee in the binary feedback setting, so this comparison is only meant to show the additional cost potentially caused by the conservative estimation.

\section{Attacks on Online Learning to Rank}\label{sec:oltr}
We now move to developing effective attack algorithms for more complicated OLTR settings. Since attacking OLTR also faces the challenges caused by binary click feedback, the attack algorithms in this section rely on our attack design for stochastic bandits with binary feedback in \Cref{sec:binary}.

\subsection{Position-Based Click Model}
\textbf{Threat Model}.
We introduce the threat model for online stochastic ranking with position-based click feedback. In each round $t$, the player chooses a list of $K$ item, $\bm{a}_t=(a_{1,t},\cdots, a_{K,t})$ to recommend. The environment generates the pre-attack click feedback $\bm{r}^0_t = (r^0_{1,t},\cdots,r^0_{K,t})\in \{0,1\}^{K}$ where $r^0_{i,t}$ is sampled from Bernoulli distribution with mean $\kappa_{i} \mu_{a_{i,t}}$. 
% \mo{define Bern, what about $\mathcal{B}$ as notation?} \jinhang{revised}
The attacker then observes $\bm{a}_t$ and $\bm{r}^0_t$, and decides the post-attack click feedback $\bm{r}_t \in \{0,1\}^{K}$. The player only receives $\bm{r}_t$ as the feedback and uses it to decide the next list to recommend, $\bm{a}_{t+1}$, for round $t+1$. Without loss of generality, we assume item $L$ is a sub-optimal target item. Similar to \Cref{sec:binary_model}, we say the attack is successful after $T$ rounds if the number of target item recommendations is $N_L(T) = T - o(T)$ in expectation or with high probability, while the cumulative attack cost $C(T) = \sum_{t=1}^{T}||\bm{r}_t - \bm{r}^0_t||_1 = o(T)$. 
% \jinhang{found $C(T)$ was used in Table 1, so I change all $\sum_t \alpha_t$ in theorems to $C(T)$}
% \mo{are you using $C(T)$ later?} \mo{how different is to have more than one target arm but $\leq K$?} \jinhang{We can actually remove $C(T)$. Our algorithms still work when more than one target arm but $<K$. Just need to put the target arms into $\bm{a}^*$. Maybe we can claim this somewhere after the algorithm design?} \mo{may be worth mentioning.}

\textbf{Attack against PBM-UCB}.
We consider the PBM-UCB algorithm in \cite{lagree2016multiple} as the online ranking algorithm of the player, which computes the UCB index of each item $a$ as
\begin{equation}\textstyle\label{eq:PBM-UCB-value}
    \bar{\mu}_a(t) = \hat{\mu}_{a}(t-1) + B_{a}(t) = \hat{\mu}_{a}(t-1) + \sqrt{\frac{N_a(t-1)(1+\epsilon)\log t}{2\tilde{N}_a(t-1)^2}},
\end{equation}
where $\tilde{N}_a(t) := \sum_{s=1}^{t}\sum_{i=1}^{K}\kappa_{i}I\{a_{i,s} = a\}$ is the position bias-corrected counter, $\hat{\mu}_{a}(t)$ is the empirical mean of item $a$, and $\epsilon$ is a parameter of the algorithm. The algorithm then chooses the corresponding first $K$ items with the highest UCB indices as the recommendation list.

\begin{algorithm}[t]
 \caption{Attack against the PBM-UCB algorithm}\label{alg:PBM-UCB}
 \begin{algorithmic}[1]
 \STATE \textbf{Initialization}: Randomly select $K-1$ items with $L$ to build $\bm{a}^*$; $h_{l,a}(0) = 1\,\, \forall l \in [L]\,\,\forall a \in \bm{a}^*$
 \FOR{$t = 1,2,3,\dots$}
    \STATE Observe $\bm{a}_t, \bm{r}_t^0$; set $\bm{\alpha}_t = (\alpha_{1,t}, \cdots, \alpha_{K,t}) = (0,\cdots,0)$
    \FOR{$i \in [K]$}
        \IF {$a_{i,t} \notin \bm{a}^*$}
            \STATE $\alpha_{i,t} = \text{\texttt{CAL\_ALPHA}($a_{i,t}$, $r^0_{i,t}$, $\bm{a}^*$)}$ 
        \ELSE
            \STATE $\alpha_{i,t} = 0$
        \ENDIF
    \ENDFOR
    \STATE Return $\bm{r}_t = \bm{r}_t^0 - \bm{\alpha}_t$; update $h_{l, a}(t) = h_{l, a}(t-1)$ for all $l \notin \bm{a}_t, a\in \bm{a}^*$
 \ENDFOR
 \end{algorithmic} 
\end{algorithm}

\begin{algorithm}[t]
 \caption{\texttt{CAL\_ALPHA}}\label{alg:cal_alpha}
 \begin{algorithmic}[1]
 \STATE \textbf{Input}: item $l$, click feedback $r^0$, item set $\bm{a}^*$
    \FOR{$a \in \bm{a}^*$}
    \STATE Calculate $\gamma_t(l,a), \tilde{\gamma}_t(l,a)$ according to \cref{eq:gamma_la,eq:ti_gamma_la}
    % \IF{$\gamma_t(l,a) \le r^0$}
    %     \STATE $h_{l,a}(t) = t$
    % \ELSE
    %     \STATE $h_{l,a}(t) = h_{l,a}(t-1)$
    % \ENDIF
    \ENDFOR
 \STATE $\gamma_{\max} = \max_{a \in \bm{a}^*} \gamma_{t}(l,a), \tilde{\gamma}_{\max} = \max_{a \in \bm{a}^*} \tilde{\gamma}_{t}(l,a)$
 \IF{$\gamma_{\max}  \le r^0$}
    \STATE $\alpha =  \lceil \gamma_{\max}  \rceil$, $h_{l,a}(t) = t$ for all $a \in \bm{a}^*$
\ELSE
    \STATE $\alpha =  \lceil \tilde{\gamma}_{\max} \rceil$, $h_{l,a}(t) = h_{l,a}(t-1)$ for all $a \in \bm{a}^*$
\ENDIF
\STATE \textbf{Return} $\alpha$
\end{algorithmic} 
\end{algorithm}

We propose our attack algorithm against PBM-UCB in \Cref{alg:PBM-UCB}. It works by first randomly taking $K-1$ items out and making them a set with the target item $L$, denoted as $\bm{a}^* = \{a^*_1, \cdots, a^*_{K-1}, L\}$. Then, based on the conservative estimation idea from \Cref{alg:UCB}, it maintains timestamp $h(l,a)$ for each item $l$ with respect to each $a \in \bm{a}^*$. The intuition is that, to ensure a similar inequality as \cref{eq:new_alpha} for all rounds, we need to make \textit{conservative estimation} on the lower bounds of $\mu_{a}$ for all $a \in \bm{a}^*$. This is handled by \Cref{alg:cal_alpha}, which maintains the timestamps $h_{l,a}(t)$ for the input item $l$ and outputs the appropriate attack value $\alpha$ on $l$ that can always ensure the required inequality.  The value of parameters $\gamma_{t}(l,a)$ and $\tilde{\gamma}_{t}(l,a)$ in \Cref{alg:cal_alpha} are computed as
\begin{equation}\label{eq:gamma_la}\textstyle
\gamma_t(l,a) = \left[N_{l}(t) \hat{\mu}^0_{l}(t) - \sum_{s\in \tau_{l}(t-1)} \alpha_{l}{(s)} - N_{l}(t) \left[\underline{\mu}_a(t) - \Delta_0\right]_{+}\right]_{+},
\end{equation}

\begin{equation}\label{eq:ti_gamma_la}\textstyle
\tilde{\gamma}_t(l,a) = \left[N_{l}(t) \hat{\mu}^0_{l}(t) - \sum_{s\in \tau_{l}(t-1)} \alpha_{l}{(s)} - N_{l}(t) \left[\underline{\mu}_a(h_{l,a}(t-1)) - \Delta_0\right]_{+}\right]_{+}.
\end{equation}
Notice that \Cref{alg:PBM-UCB} could also work when there are more than one but less than $K+1$ target arms (the goal of the attacker becomes misguiding the player to recommend all target arms very often with sublinear cost). The only modification required is to put all of these target arms into $\bm{a}^*$.

% \textbf{Analysis}. Similar to the $L$-armed setting, we can prove that the post-attack feedback of \Cref{alg:PBM-UCB} is always valid.
% \begin{lemma}
% The post-attack feedback of \Cref{alg:PBM-UCB} is always valid, i.e., $\bm{r}_{t} \in \{0,1\}^{M}$ for any $t$.
% \end{lemma}
\textbf{Analysis}.
The following theorem shows the attack against PBM-UCB is successful.
\begin{theorem}\label{thm:pbm}
Suppose $T \ge L$ and $\delta \le 1/2$. With probability at least $1 - \delta$, \Cref{alg:PBM-UCB} misguides the PBM-UCB algorithm to recommend the target item $L$ at least
$T - (L-K)\left(\frac{1+\epsilon}{2 \kappa^2_{K}\Delta_0^2} \log T\right)$ rounds, using a cumulative attack cost at most
\begin{equation*}\textstyle
C(T) \le \left(\frac{1+\epsilon}{2 \kappa^2_{K}\Delta_0^2} \log T\right)\sum_{a<L}\left(\Delta_a + \Delta_0 + 4\beta\left(\frac{1+\epsilon}{2 \kappa^2_{K}\Delta_0^2} \log h_{a,L}(T)\right)\right).    
\end{equation*}
When $T$ goes to infinity, we have
$$ \lim_{T \rightarrow \infty} \frac{C(T)}{\log T}
\le O\left(\sum_{a<K}\frac{(1+\epsilon)(\Delta_a + \Delta_0)}{\kappa^2_{K}\Delta_0^2}\right).$$
\end{theorem}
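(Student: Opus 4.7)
The plan is to mirror the three-stage argument used for Theorem~1 (validity, a high-probability event, pull-count and cost accounting), while carefully handling two features that are new to the PBM setting: the position bias-corrected counter $\tilde{N}_a(t)$ inside the UCB index, and the fact that the conservative estimation in \Cref{alg:cal_alpha} must work simultaneously against all $K$ items in $\bm{a}^*$ rather than against a single target arm.

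First I would establish the PBM-analog of \Cref{lemma:binary}: for every round $t$ and every position $i$, the value $\alpha_{i,t}$ returned by \texttt{CAL\_ALPHA} lies in $\{0,1\}$ and satisfies $\alpha_{i,t}\le r^0_{i,t}$. This is proved by induction over pulls of each fixed item $l\notin\bm{a}^*$, using the definitions of $\gamma_t(l,a)$ and $\tilde{\gamma}_t(l,a)$ in \cref{eq:gamma_la,eq:ti_gamma_la} exactly as in the single-arm proof, and noting that taking the maximum over $a\in\bm{a}^*$ only tightens, but does not invalidate, the feasibility check $\gamma_{\max}\le r^0$. Next I would condition on the clean event $\mathcal{E}$ that $|\hat{\mu}^0_l(t)-\mu_l|\le \beta(N_l(t))$ for all $l$ and $t$, which holds with probability at least $1-\delta$ by the same Hoeffding-plus-union-bound argument used in \cite{jun2018adversarial}. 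On $\mathcal{E}$, the max-over-$a$ construction of \texttt{CAL\_ALPHA} together with the conservative-timestamp update $h_{l,a}$ gives the PBM-analog of \cref{eq:new_alpha}: for every $l\notin\bm{a}^*$, every $a\in\bm{a}^*$, and every round $t$,
\begin{equation*}
\hat{\mu}_l(t) \le \hat{\mu}_a(h_{l,a}(t-1)) - 2\beta\bigl(N_a(h_{l,a}(t-1))\bigr) - \Delta_0.
\end{equation*}

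Second, I would translate this post-attack inequality into a bound on $N_l(T)$ for each non-target item $l$. The key calculation uses $\kappa_i\ge\kappa_K$ for all $i\le K$, hence $\tilde{N}_a(t)\ge\kappa_K N_a(t)$, which yields
\begin{equation*}
B_a(t) \;\le\; \sqrt{\frac{(1+\epsilon)\log t}{2\,\kappa_K^2\, N_a(t-1)}}.
\end{equation*}
If $l\notin\bm{a}^*$ is selected at round $t$, then $l$ must have PBM-UCB index at least as large as $L$'s; combining this with clean-event bounds on $\hat{\mu}^0$ and the post-attack gap of at least $\Delta_0$ forces $B_l(t)\ge\Delta_0/2$ (up to lower-order corrections), which after inverting the confidence width gives $N_l(T)\le \frac{1+\epsilon}{2\kappa_K^2\Delta_0^2}\log T$. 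Summing over the $L-K$ items outside $\bm{a}^*$ yields the claimed lower bound on $N_L(T)$, since $L\in\bm{a}^*$ is pulled whenever all remaining items in $\bm{a}^*$ are pulled, and together these account for all but at most $(L-K)\cdot\frac{1+\epsilon}{2\kappa_K^2\Delta_0^2}\log T$ of the $T$ rounds.

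Third, for the cost I would bound $\alpha_{i,t}$ per pull by the telescoping argument used in \Cref{sec:ucb_analysis}: the cumulative attack on item $l$ is at most
$N_l(T)\bigl(\hat{\mu}^0_l(T)-\underline{\mu}_L(h_{l,L}(T))+\Delta_0\bigr)$
which, on $\mathcal{E}$ and after inserting $\Delta_l=\mu_l-\mu_L$ and the confidence bound $\beta(N_L(h_{l,L}(T)))$, gives
$C_l(T)\le N_l(T)\bigl(\Delta_l+\Delta_0+4\beta(N_L(h_{l,L}(T)))\bigr)$.
Plugging the pull-count bound from the previous step and summing over $l<L$ gives the finite-time cost bound in the theorem; the asymptotic statement then follows because $\beta(\cdot)=o(1)$ as its argument grows.

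The main obstacle will be the simultaneous conservative estimation against all of $\bm{a}^*$: I must show that, even though \texttt{CAL\_ALPHA} only updates the timestamps $h_{l,a}$ when $\gamma_{\max}\le r^0$, the use of the \emph{max} over $a$ still yields a feasible binary attack \emph{and} preserves the per-pair inequality for every $a\in\bm{a}^*$ in every round. The careful bookkeeping of $h_{l,a}$ and the induction establishing that $\tilde{\gamma}_{\max}\le 1$ whenever $\gamma_{\max}>r^0$ is the delicate part; everything else is essentially a PBM-weighted rerun of the Theorem~1 calculation.
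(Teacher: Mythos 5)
Your proposal follows essentially the same route as the paper's proof: validity of the post-attack feedback via the same induction as \Cref{lemma:binary}, the per-pair conservative inequality $\hat{\mu}_l(t)\le\hat{\mu}_a(h_{l,a}(t-1))-2\beta(N_a(h_{l,a}(t-1)))-\Delta_0$ maintained against every $a\in\bm{a}^*$, the bias-corrected-counter bound $\tilde{N}_a(t)\ge\kappa_K N_a(t)$ to control $B_a(t)$, and the telescoping cost bound obtained by instantiating the inequality at $a=L$. The only slips are cosmetic: the item whose UCB index the chosen $l\notin\bm{a}^*$ must dominate is some displaced $a\in\bm{a}^*$ rather than $L$ specifically (which is exactly why the per-pair inequality is kept for all of $\bm{a}^*$), and the paper obtains $B_l(t)\ge\Delta_0$ outright from $B_l(t)-B_a(t)\ge\Delta_0$ together with $B_a(t)\ge 0$, which is what yields the stated constant rather than the weaker $B_l(t)\ge\Delta_0/2$ you invoke.
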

\textit{Proof sketch}. Whenever $a_{i,t} \notin \bm{a}^*$ is chosen by the player, there must exist $a \in \bm{a}^*$ such that $\bar{\mu}_{a_{i,t}}(t) \ge \bar{\mu}_{a}(t)$. The output $\alpha_{i,t}$ of \Cref{alg:cal_alpha} would ensure $\hat{\mu}_{a_{i,t}}(t) \le \hat{\mu}_{a}(h_{a_{i,t}, a}(t-1)) - 2\beta(N_a(h_{a_{i,t}, a}(t-1))) - \Delta_0$, owing to the conservative estimations in $\gamma_t(a_{i,t},a)$ and $ \tilde{\gamma}_t(a_{i,t},a)$. Combining these two inequalities, we can get $B_{a_{i,t}}(t) - B_{a}(t) \ge \Delta_0$. With a careful calculation on this inequality involved bias-corrected counters, we have $N_{a_{i,t}}(t) \le \left(\frac{1+\epsilon}{2 \kappa^2_{K}\Delta_0^2} \log t\right)$. This result holds for any $a_{i,t} \notin \bm{a}^*$. Thus, we immediately get the bound of $N_L(t)$. The remaining proof for the cost bound will be similar to that of \Cref{thm:binary}. 
% \mo{either as part of the proof sketch or as a separate remark after the theorem explain the additional challenge w.r.t, standard attack with binary feedback. also, we may discuss the significance of this result. sort of explaining with log attack cost the attacker corrupts the alg and the target arm will be selected too many times such that alg fails to achieve sublinear regret. }

Compared with \Cref{thm:binary}, the asymptotic cost of \Cref{alg:PBM-UCB} has an additional dependency on $1/ \kappa_K^2$, which suggests it may require more cost to achieve a successful attack in the PBM model, though the cost dependency on $T$ is still logarithmic.

\subsection{Cascade Click Model}
% \begin{enumerate}
%     \item Bandit algorithm recommend $K$ out of $L$ items, $\bm{a}_t=(a_{1,t},\cdots, a_{K,t})$
%     \item User generate pre-attack click feedback $\bm{r}^0_t \in \{0,1\}^{K}, ||\bm{r}^0_t|| \le 1$
%     \item Attacker observes $\bm{a}_t, \bm{r}^0_t$, and decides post-attack feedback $\bm{r}_t \in \{0,1\}^{K}, ||\bm{r}_t|| \le 1$
%     \item Bandit algorithm receives $\bm{r}_t$ and then chooses the next arm $\bm{a}_t$
% \end{enumerate}
% For any target item $k$, we say the attack is successful after $T$ rounds if:
% \begin{itemize}
%     \item $\mathbb{E}[N_k(T)] = T - o(T)$ (\# of recommendataions, not \# of samples)
%     \item $\mathbb{E}[C(T)] = \mathbb{E}[\sum_{t=1}^{T}||\bm{r}_t - \bm{r}^0_t||_1] = o(T)$
% \end{itemize}
\textbf{Threat Model}.
We introduce the threat model for OLTR with cascade click feedback. In each round $t$, the player chooses a list of $K$ item, $\bm{a}_t=(a_{1,t},\cdots, a_{K,t})$ to recommend. The environment generates the pre-attack click feedback $\bm{r}^0_t = (r^0_{1,t},\cdots,r^0_{K,t})\in \{0,1\}^{K}$, $||\bm{r}^0_t||\le 1$. Let $\textbf{C}_t$  denote the position of the clicked item, i.e., $r^0_{\textbf{C}_t,t} = 1$ ($\textbf{C}_t = \infty$ if none item was clicked). The attacker observes $\bm{a}_t, \bm{r}^0_t, \textbf{C}_t$, and decides the post-attack click feedback $\bm{r}_t \in \{0,1\}^{K}$, $||\bm{r}_t||_1 \le 1$. 
% \mo{isn't too strong to assume that the attacker sees $\textbf{C}_t$? and if this assumption is ok, why the attacker should bother changing any other reward but that of $\textbf{C}_t$?}
% \jinhang{$\textbf{C}_t$ can directly be observed from $\bm{r}^0_t$ (just the position of the first non-zero item). We do only attack $\textbf{C}_t$, but setting it to zero would misguide the learning agent to keep checking the items after $\textbf{C}_t$ (all zeros); we need to set the item in $\bm{a}^*$ after $\textbf{C}_t$ to 1 so that it will not be under-estimated by the agent.}\jinhang{I forgot to mention this in the attack design; adding it now} 
The player only receives $\bm{r}_t$ as the feedback and uses it to decide the next list to recommend, $\bm{a}_{t+1}$, for round $t+1$. The goal of the attacker in this setting is the same as that in the PBM model.

\begin{algorithm}[t]
 \caption{Attack against the CascadeUCB algorithm}\label{alg:CascadeUCB}
 \begin{algorithmic}[1]
 \STATE \textbf{Initialization}: Randomly select $K-1$ items with $L$ to build $\bm{a}^*$; $h_{l,a}(0) = 1\,\, \forall l \in [L]\,\,\forall a \in \bm{a}^*$
 \FOR{$t = 1,2,3,\dots$}
    \STATE Observe $\bm{a}_t, \bm{r}_t^0, \textbf{C}_t$; set $\bm{\alpha}_t = (\alpha_{1,t}, \cdots, \alpha_{K,t}) = (0,\cdots,0)$
    \IF{$\textbf{C}_t \le K$ and $a_{\textbf{C}_t,t} \notin \bm{a}^*$}
        \STATE $\alpha_{\textbf{C}_t,t} = \texttt{CAL\_ALPHA}(a_{\textbf{C}_t,t}, r^0_{\textbf{C}_t,t}, \bm{a}^*)$
        \IF{$\alpha_{\textbf{C}_t,t} = 1$ and $\exists i>C_{t}$ s.t. $a_{i,t} \in \bm{a^*}$}
            \STATE $\alpha_{i,t} = -1$  
        \ENDIF
    \ENDIF
    \STATE Return $\bm{r}_t = \bm{r}_t^0 - \bm{\alpha}_t$; update $h_{l, a}(t) = h_{l, a}(t-1)$ for all $l \neq a_{\textbf{C}_t,t}, a\in \bm{a}^*$
 \ENDFOR
 \end{algorithmic} 
\end{algorithm}

\textbf{Attack against CascadeUCB.}
We propose an attack algorithm against CascadeUCB in \Cref{alg:CascadeUCB}. Same as the attack against PBM-UCB, it first randomly generates a set of items $\bm{a}^* = \{a^*_1, \cdots, a^*_{K-1}, L\}$.
It also follows the idea of conservative estimation: when the clicked item $a_{\textbf{C}_t, t}$ does not belong to $\bm{a}^*$, it calls \Cref{alg:cal_alpha} to maintain $h_{a_{\textbf{C}_t, t} ,a}$ for all $a \in \bm{a}^*$ and compute the attack value $\alpha_{\textbf{C}_t, t}$ based on the conservative estimations. If the output $\alpha_{\textbf{C}_t, t}=1$, which means the algorithm sets the clicked position to be zero, the player will keep checking the positions after $\textbf{C}_t$. Since the pre-attack feedback of all items after position $\textbf{C}_t$ is zero, we need to find the first item $a_{i,t} \in \bm{a}^*$ after position $\textbf{C}_t$ and set $\alpha_{i,t} = -1$ ($r_{i,t} = 1$). The empirical means of the items between position $\textbf{C}_t$ and $i$ that are not in $\bm{a}^*$ decreases, and the empirical mean of $a_{i,t} \in \bm{a}^*$ increases. Thus it does not affect the success of the attack while still making the post-attack feedback $\bm{r}_t$ to be valid.

\textbf{Analysis}. 
% \mo{can we discuss a bit about the technical difference between pbm vs. cascade model that makes this section independently interesting?}\jinhang{maybe it's better to put the discussion on mismatch between recommendation and observation in the proof sketch here.}
First, note that there is a mismatch between recommendation and observation in the cascade model: for all $i$ such that $\textbf{C}_t < i \le K$, $a_{i,t}$ is recommended but not observed, i.e., there is no new click sample for item $a_{i,t}$ for estimation. We can still follow a similar proof of \Cref{thm:pbm} to get the upper bound of the number of observations (samples) for $a_{i,t}\notin \bm{a^*}$, but it is less than the number of recommendations thus cannot ensure the success of the attack. To tackle this problem, we use a new regret-based analysis on the expected number of recommendations. 
% \begin{lemma}
% The post-attack feedback of \Cref{alg:CascadeUCB} is valid, i.e., $\bm{r}_{t} \in \{0,1\}^{K}, ||\bm{r}_{t}||_1 \le 1$ for any $t$.
% \end{lemma}
We can also prove that the post-attack feedback of \Cref{alg:CascadeUCB} is always valid.
Define $p^* := \prod_{i=1}^{K-1} \mu_{i}$. We give the following theorem of the successful attack against CascadeUCB.
\begin{theorem} \label{thm:cascade}
Suppose $T \ge L$ and $\delta \le 1/2$. With probability at least $1 - \delta$, \Cref{alg:PBM-UCB} misguides the CascadeUCB algorithm to choose the target arm at least
$T - (L-K)\left(\frac{12}{p^{*}\Delta_0^2} \log T\right)$ rounds in expectation. Its cumulative attack cost at most
\begin{equation*}\textstyle
C(T) \le \left(1+\frac{3}{\Delta_0^2} \log T\right) \sum_{a<K}\left(\Delta_a + \Delta_0 + 4\beta\left(1+\frac{3}{\Delta_0^2} \log h_{a,L}(T)\right)\right).    
\end{equation*}
As $T$ goes to infinity, we have
$$ \lim_{T \rightarrow \infty} \frac{C(T)}{\log T}
\le O\left(\sum_{a<K}\frac{\Delta_a + \Delta_0}{\Delta_0^2}\right).$$
\end{theorem}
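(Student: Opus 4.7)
The plan is to follow the template of the proof of \Cref{thm:pbm}, with a new step inserted to handle the partial-observation issue specific to the cascade model. The argument proceeds in three stages: (i) feasibility of the post-attack feedback plus the conservative-estimation guarantee, which together bound the number of \emph{clicks} on any non-$\bm{a}^*$ item; (ii) a regret-style conversion from clicks to recommendations, which introduces the $1/p^*$ factor and is the main technical obstacle; and (iii) a telescoping cost analysis essentially identical to that of \Cref{thm:pbm}.

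\textbf{Feasibility and observation bound.} First I would prove by induction, in the spirit of \Cref{lemma:binary}, that \Cref{alg:CascadeUCB} always returns a valid cascade feedback vector $\bm{r}_t \in \{0,1\}^K$ with $\|\bm{r}_t\|_1 \le 1$. The only nontrivial case is $\alpha_{\textbf{C}_t,t}=1$: the single click at $\textbf{C}_t$ is removed, and if an item of $\bm{a}^*$ appears at some later position a click is added there instead, so the output still contains at most one click. I would then invoke \Cref{alg:cal_alpha}: whenever an item $l\notin\bm{a}^*$ is clicked at round $t$, the returned $\alpha$ enforces
\[
\hat\mu_l(t) \le \hat\mu_a\bigl(h_{l,a}(t-1)\bigr) - 2\beta\bigl(N_a(h_{l,a}(t-1))\bigr) - \Delta_0 \qquad \forall a\in\bm{a}^*.
\]
Combining this with the usual $(1-\delta)$-concentration event used in the proof of \Cref{thm:binary} implies that on the good event the number of clicks ever received by any $l\notin\bm{a}^*$ is at most $n_0 := 1+\tfrac{3}{\Delta_0^2}\log T$, since any further click would render $l$'s UCB index strictly smaller than the UCB index of every member of $\bm{a}^*$.

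\textbf{Regret-style conversion (main obstacle).} The main obstacle is that in the cascade model the number of clicks on an item is in general strictly smaller than the number of times it is recommended, so the observation bound above does not by itself control $N_L(T)$. My plan is to first argue that, once CascadeUCB's perceived top-$K$ list equals $\bm{a}^*$, any recommendation of a non-$\bm{a}^*$ item $l$ must displace some $a\in\bm{a}^*$ from the list; hence, conditional on the history up to round $t$, the click probability of $l$ at its position is at least $p^* = \prod_{i=1}^{K-1}\mu_i$, because in the worst case $l$ sits below all $K-1$ genuinely high-probability items of $\bm{a}^*\setminus\{L\}$. A Wald-type martingale argument relating the number of observations $N_l^{\mathrm{obs}}(T)$ to the number of recommendations $N_l^{\mathrm{rec}}(T)$ then yields $\mathbb{E}[N_l^{\mathrm{rec}}(T)] \le n_0/p^*$ up to lower-order terms; summing over the $L-K$ items outside $\bm{a}^*$ gives the announced factor $\tfrac{12}{p^*\Delta_0^2}\log T$ in the recommendation bound, with the constant $12$ absorbing the conversion constants from $n_0$ together with the $1/p^*$ factor.

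\textbf{Cost bound.} The cumulative cost bound then follows from the same telescoping on $\gamma_t$ and $\tilde\gamma_t$ used in the proofs of \Cref{thm:binary,thm:pbm}: for each $l\notin\bm{a}^*$ the total attack cost is bounded by $N_l^{\mathrm{obs}} \cdot \bigl(\Delta_l + \Delta_0 + 4\beta(\cdot)\bigr)$, and substituting $N_l^{\mathrm{obs}} \le n_0$ along with the decreasing behaviour of $\beta$ at argument $1+\tfrac{3}{\Delta_0^2}\log h_{l,L}(T)$ yields both the finite-time bound and the $O(\log T)$ asymptotic rate claimed in the theorem.
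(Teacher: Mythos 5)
Your outline agrees with the paper's proof on the two easy stages (feasibility of the post-attack cascade feedback, and the telescoping cost bound, which indeed reuses the \Cref{thm:pbm} machinery with the standard confidence radius), but the central step --- converting the per-item observation bound into a bound on the number of \emph{recommendations} --- is where your argument breaks, and it is also where you diverge from the paper. You claim that whenever a non-$\bm{a}^*$ item $l$ is recommended, its click/examination probability is at least $p^*=\prod_{i=1}^{K-1}\mu_i$ ``because in the worst case $l$ sits below all $K-1$ genuinely high-probability items of $\bm{a}^*$.'' This is backwards for the cascade model: position $k$ is examined only if no item above it is clicked, so the examination probability is $\prod_{j<k}(1-\mu_{a_{j,t}})$, a product of complements. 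Sitting below high-mean items makes $l$ \emph{less} likely to be examined, and $\prod_i\mu_i$ does not lower-bound that probability (take all $\mu_i$ close to $1$: your bound tends to $1$ while the true examination probability tends to $0$). Without a correct per-recommendation observation probability, the Wald-type argument gives no control on $\mathbb{E}[N_l^{\mathrm{rec}}(T)]$, so the recommendation bound and hence $N_L(T)=T-o(T)$ do not follow. Two smaller symptoms: the UCB-index argument bounds the number of \emph{observations} (the counter CascadeUCB actually increments), not the number of clicks as you state; and your $n_0/p^*$ would produce $\tfrac{3}{p^*\Delta_0^2}\log T$, not the stated $\tfrac{12}{p^*\Delta_0^2}\log T$ --- the $12$ is not a conversion constant you can absorb, which suggests the bound is being reverse-engineered rather than derived.

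The paper avoids the observation-to-recommendation conversion entirely via a regret decomposition: after the attack, the post-attack instance is a cascading bandit in which every $l\notin\bm{a}^*$ has expected reward at least $\Delta_0$ below $L$, so the per-item regret bound of CascadeUCB from Kveton et al.\ gives $R_l(T)\le\tfrac{12}{\Delta_0}\log T$, while each round in which a list containing $l$ is recommended incurs instantaneous regret at least $p^*\Delta_0$; dividing the two yields $\mathbb{E}[N_l^{\mathrm{rec}}(T)]\le\tfrac{12}{p^*\Delta_0^2}\log T$ directly. If you want to rescue your martingale route, you would need to lower-bound the examination probability by the appropriate product of complements (and account for the fact that what the player counts as ``observed'' is determined by the \emph{post-attack} clicks, which \Cref{alg:CascadeUCB} sometimes relocates to a later position), which is a different and messier quantity than $p^*$.
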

\textit{Proof sketch}. As mentioned above, we use a new regret-based analysis. The intuition is that the regret caused by any suboptimal item is the product of its reward gap with respect to the optimal list and its expected number of recommendations. If we know the regret upper bound and the reward gap lower bound, we can derive the upper bound of the expected number of recommendations. To do so, we first show that the post-attack problem can be viewed as a problem with known reward gap lower bound. This can be obtained by $\hat{\mu}_{a_{i,t}(t)} \le \hat{\mu}_{L}(h_{L_{i,t}, a}(t-1)) - 2\beta(N_a(h_{a_{i,t}, L}(t-1))) - \Delta_0$ for any $a_{i,t} \notin \bm{a}^*$, which indicates the post-attack expected reward gap between $a_{i,t}$ and $L$ is always larger than $\Delta_0$. Then, the lower bound of the reward gap of $a_{i,t}$ with respect to the optimal list is $p^* \Delta_0$. Based on the regret upper bound $12\log T/ \Delta_0 $ given in \cite{kveton2015cascading}, we can get the upper bound of the expected number of recommendations is $\frac{12}{p^{*}\Delta_0^2} \log T$. For the remaining cost analysis, since the attack cost only depends on the observed items, we can still follow the proof of \Cref{thm:pbm}.

\begin{algorithm}[t]
 \caption{Attack against arbitrary algorithm}\label{alg:general}
 \begin{algorithmic}[1]
 \STATE \textbf{Initialization}: Randomly select $K-1$ items with target $L$ to build $\bm{a}^*$
 \FOR{$t = 1,2,3,\dots$}
    \STATE Observe $\bm{a}_t, \bm{r}_t^0$
    \FOR{$i \in [K]$}
        \STATE $\alpha_{i,t} = 0$
        \IF{$a_{i,t} \notin \bm{a}^*$ and $r^0_{i,t} = 1$}
            \STATE $p_{i,t} = \max_{a\in \bm{a}^*} \frac{\left[\hat{\mu}^0_{a_{i,t}}(t) + \beta(N_{a_{i,t}}(t)) - \hat{\mu}^0_{a}(t) + \beta(N_{a}(t))\right]_{+}}{\hat{\mu}^0_{a_{i,t}}(t) + \beta(N_{a_{i,t}}(t))}$
            \STATE With prob. $p_{i,t}$, set $\alpha_{i,t} = 1$
        \ENDIF
    \ENDFOR
    \STATE Return $\bm{r}_t = \bm{r}_t^0 - \bm{\alpha}_t$
 \ENDFOR
 \end{algorithmic} 
\end{algorithm}

\subsection{General Attacks on OLTR with General Click Model}
We have provided attack strategies against UCB-based OLTR algorithms under two specific click models. A natural follow-up question is whether there is any attack strategy that can attack any OLTR algorithm under the general click model. To answer this question in what follows, we design an attack strategy that can misguide any OLTR algorithm without knowing the underlying algorithm. However, it may pay more cost (still sublinear) than the others since it cannot take advantage of the details of the algorithm to make fine-tuned adjustments.

\textbf{Threat Model}.
We consider the threat model for OLTR with general click feedback. In each round $t$, the player chooses a list of $K$ item, $\bm{a}_t=(a_{1,t},\cdots, a_{K,t})$ to recommend. The environment generates the pre-attack click feedback $\bm{r}^0_t = (r^0_{1,t},\cdots,r^0_{K,t})\in \{0,1\}^{K}$, $\bm{r}^0_t \in \mathcal{R}_c$, where $\mathcal{R}_c$ is the feasible feedback space of click model $c$. 
% \mo{can we define $R_c$ of either cascade or pbm to see how this is general?}\jinhang{it is a bit hard to define cuz the click model $c$ also decides how to generate $r^0_t$ but not only the feasible feedback space; like for PBM, $\mathcal{R}_c$ is just $\{0,1\}^K$ but the way to generate $r^0_t$ can be different from other models.}
The attacker observes $\bm{a}_t, \bm{r}^0_t$, and decides the post-attack click feedback $\bm{r}_t \in \{0,1\}^{K}$, $\bm{r}_t \in \mathcal{R}_c$.
Notice that the attack should know $\mathcal{R}_c$, otherwise, it is impossible to ensure valid post-attack feedback.
The player only receives $\bm{r}_t$ as the feedback and uses it to decide the next list to recommend, $\bm{a}_{t+1}$. The goal of the attacker is the same as that in PBM model.

\textbf{General Attack against Arbitrary OLTR Algorithm}.
We propose an attack algorithm against arbitrary OLTR algorithm in \Cref{alg:general}. Same as the attack against PBM-UCB, it first randomly generate a set of items $\bm{a}^* = \{a^*_1, \cdots, a^*_{K-1}, L\}$. In each round, for each clicked item $a_{i,t} \notin \bm{a}^*$, the algorithm calculates an attack probability $p_{i,t}$ and use that to decide whether its feedback needs to be changed to unclicked ($r_{i,t} = 0$). We prove that $p_{i,t}$ is actually an estimated upper bound of $\Delta_i / \mu_i$, thus by such probabilistic feedback perturbations, the algorithm makes all items outside $\bm{a}^*$ be worse than the items inside $\bm{a}^*$, which guarantees the success of the attack.
\begin{theorem}\label{thm:general}
Suppose $T \ge L$ and $\delta \le 1/2$. With probability at least $1 - \delta$, \Cref{alg:general} misguides arbitrary OLTR algorithm that chooses sub-optimal items at most $O(\log T)$ rounds to choose the target item at least
$T - O(\log T)$ rounds in expectation. Its cumulative attack cost is at most
\begin{equation*}\textstyle
C(T) \le O\left(\sum_{a<L}(\Delta_a + 4\beta(1)) \log T \right).    
\end{equation*}
\end{theorem}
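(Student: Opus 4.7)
The plan is to combine a Hoeffding-type concentration argument with the probabilistic-attack design of \Cref{alg:general}. First I would define the clean event
\[
\mathcal{E} := \{|\hat{\mu}^0_a(t) - \mu_a| \le \beta(N_a(t)) \text{ for all } a \in [L] \text{ and all } t \le T\},
\]
and, exactly as in the analyses of \Cref{thm:binary,thm:pbm,thm:cascade}, apply Hoeffding's inequality together with a union bound to show $\Pr(\mathcal{E}) \ge 1 - \delta$. All subsequent reasoning is carried out on $\mathcal{E}$. Since the target arm $L$ has the smallest mean, every $a \in \bm{a}^*$ satisfies $\mu_a \ge \mu_L$, which will be convenient when taking the max over $\bm{a}^*$.

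The heart of the argument is the structural claim that for every round $t$, every slot $i$ with $a_{i,t} \notin \bm{a}^*$ and $r^0_{i,t}=1$, and every $a \in \bm{a}^*$,
\[
p_{i,t} \;\ge\; \frac{\mu_{a_{i,t}} - \mu_a}{\mu_{a_{i,t}}}.
\]
Writing $U := \hat{\mu}^0_{a_{i,t}}(t) + \beta(N_{a_{i,t}}(t))$ and $V := \hat{\mu}^0_a(t) - \beta(N_a(t))$, the quantity inside the max in Line 7 of \Cref{alg:general} equals $[U-V]_{+}/U = 1 - V/U$, while $\mathcal{E}$ gives $U \ge \mu_{a_{i,t}} > 0$ and $V \le \mu_a$, hence $V/U \le \mu_a/\mu_{a_{i,t}}$. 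Taking the max over $a \in \bm{a}^*$ then yields the claim. Consequently the post-attack conditional mean of any $a_{i,t}\notin\bm{a}^*$ satisfies $\mu_{a_{i,t}}(1-p_{i,t}) \le \min_{a\in\bm{a}^*} \mu_a$, so every item outside $\bm{a}^*$ is strictly sub-optimal in the post-attack environment, while $\bm{a}^*$ itself---whose items are never touched by the attack---remains exactly the post-attack top-$K$ optimal list because $|\bm{a}^*|=K$.

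With this structural fact in hand I would invoke the hypothesis that the underlying OLTR algorithm incurs only $O(\log T)$ sub-optimal slot-recommendations in expectation, so the total number of recommendations of items outside $\bm{a}^*$ is at most $O(\log T)$. Any round in which $L$ is absent must spend at least one of its $K$ slots on such an item, giving $\mathbb{E}[N_L(T)\mid\mathcal{E}] \ge T - O(\log T)$. For the cost, an attack occurs only when $r^0_{i,t}=1$ and $a_{i,t}\notin\bm{a}^*$, so its expected per-round cost is $\mu_{a_{i,t}} p_{i,t}$. Reversing the confidence bounds on $\mathcal{E}$ gives $U \le \mu_{a_{i,t}} + 2\beta(N_{a_{i,t}}(t))$ and $V \ge \mu_a - 2\beta(N_a(t))$, so $\mu_{a_{i,t}} p_{i,t} \le \mu_{a_{i,t}}\max_a(U-V)/U \le \max_a(U-V) \le \Delta_{a_{i,t}} + 4\beta(1)$ (using $\mu_{a_{i,t}}/U \le 1$, $\mu_a \ge \mu_L$, and the monotonicity of $\beta$). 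Summing over all $a < L$ and multiplying by the $O(\log T)$ pull bound produces the stated cost.

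The main obstacle I expect is pinning down the structural claim cleanly. Unlike \Crefrange{alg:UCB}{alg:CascadeUCB}, \Cref{alg:general} is oblivious to the learner's internal state, so one cannot force deterministic inequalities on empirical means; instead one must argue at the level of \emph{conditional} means that a random Bernoulli$(p_{i,t})$ zeroing makes the attacked arm uniformly worse than every $a \in \bm{a}^*$ simultaneously. The delicate point is that this simultaneous dominance is achieved by the arg-max over $\bm{a}^*$ together with the $[\cdot]_{+}$ clipping, and it relies on the monotonicity $V/U \le \mu_a/\mu_{a_{i,t}}$ which in turn needs $\mu_{a_{i,t}} > 0$ supplied by $\mathcal{E}$. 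Once this is established, the count bound follows directly from the assumed no-regret property and the cost bound falls out of the standard concentration pattern used throughout the paper.
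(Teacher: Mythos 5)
Your proposal is correct and follows essentially the same route as the paper's own proof: condition on the Hoeffding clean event, show that $p_{i,t}$ dominates $\Delta_{a_{i,t}}/\mu_{a_{i,t}}$ so that the post-attack mean of every item outside $\bm{a}^*$ falls below $\mu_L$, invoke the assumed $O(\log T)$ bound on sub-optimal recommendations, and bound the expected per-recommendation cost by $\Delta_a + 4\beta(1)$. The only differences are presentational (you establish the dominance for every $a\in\bm{a}^*$ rather than just $a=L$ before taking the max, and you make the $\mu_{a_{i,t}}p_{i,t}$ expected-cost step explicit), which do not change the argument.
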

Compared with the results in the PBM and cascade models, the $\beta()$ term in the cumulative cost of \Cref{alg:general} is $\beta(1)$, which can be much larger than the others. Also, the asymptotic costs of \Cref{alg:PBM-UCB} and \Cref{alg:CascadeUCB} is independent of $\beta$'s, while the asymptotic cost of \Cref{alg:general} depends on $\beta$, showing that \Cref{alg:general} is more costly than attack strategies specific to OLTR algorithms. 
\section{Experiments}\label{sec:experiment}
\begin{figure}[t]
	\centering
 	\begin{subfigure}[b]{0.24\textwidth}
    	\centering
    	\includegraphics[width=\textwidth]{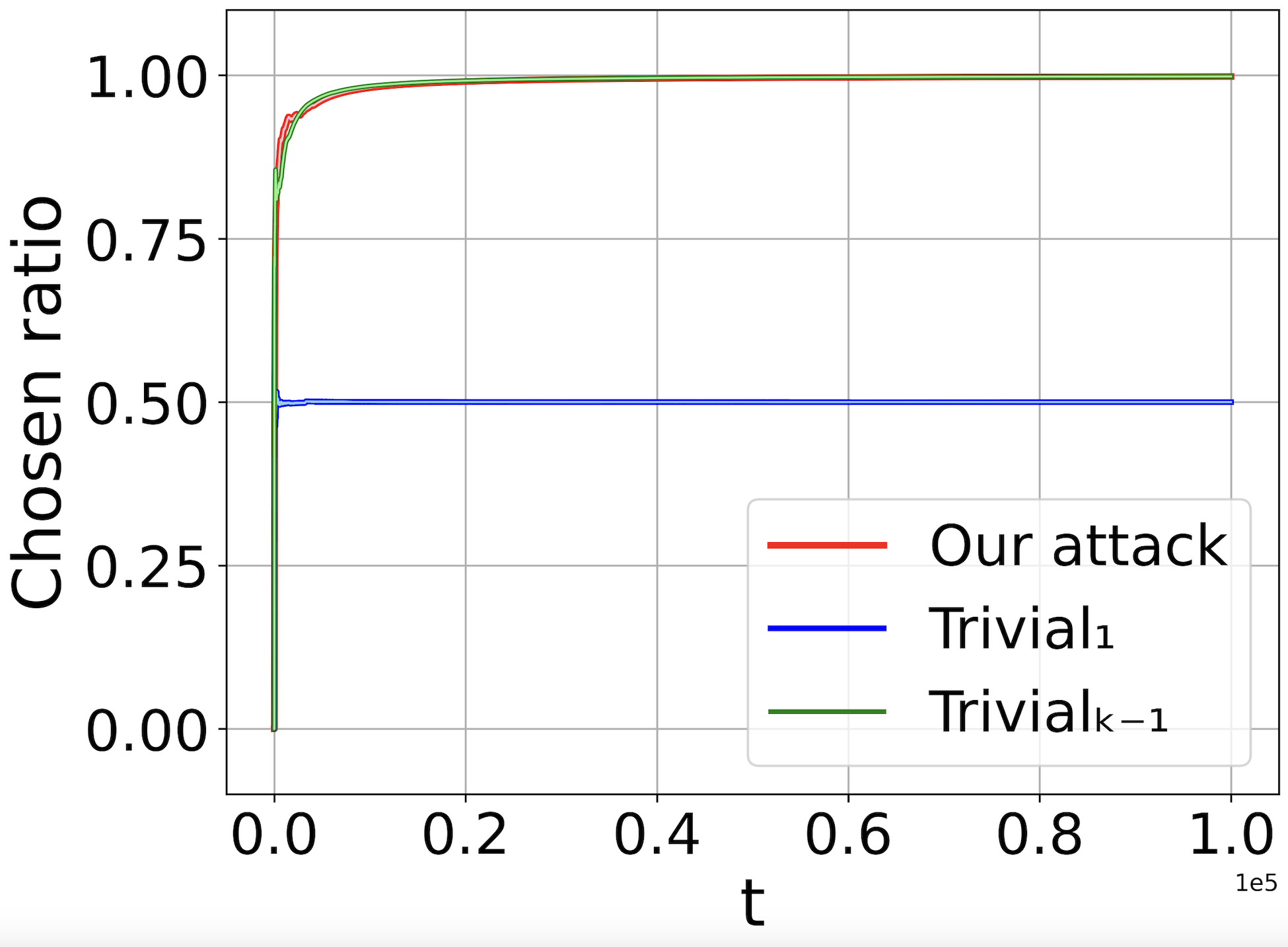}
    	\caption{Chosen ratio (syn.)}
    	\label{fig:SynData_PBM_Ratio}
	\end{subfigure}
	\begin{subfigure}[b]{0.235\textwidth}
		\centering
		\includegraphics[width=\textwidth]{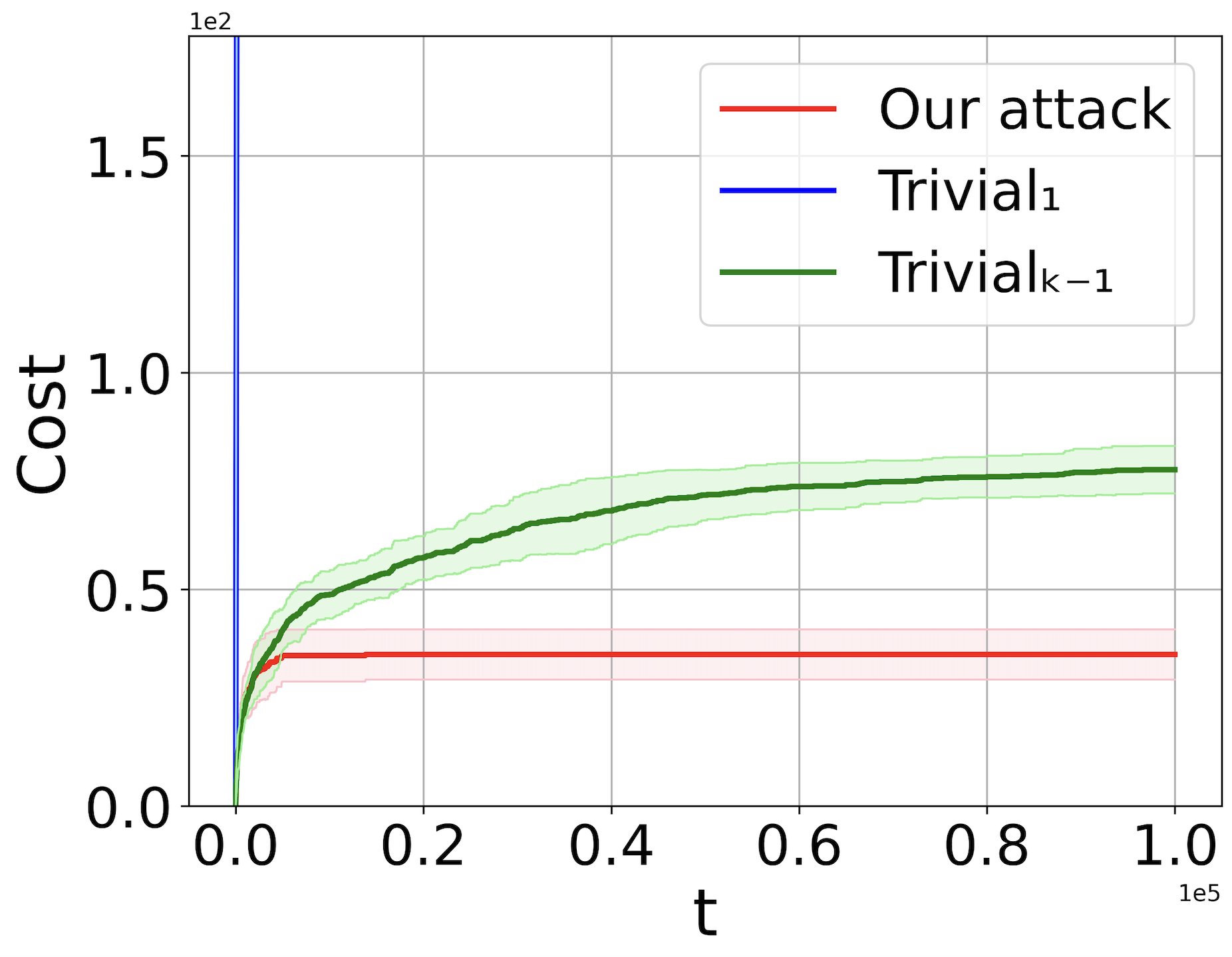}
		\caption{Cost (syn.)}
		\label{fig:SynData_PBM_Cost}
	\end{subfigure}
 	\begin{subfigure}[b]{0.24\textwidth}
		\centering
		\includegraphics[width=\textwidth]{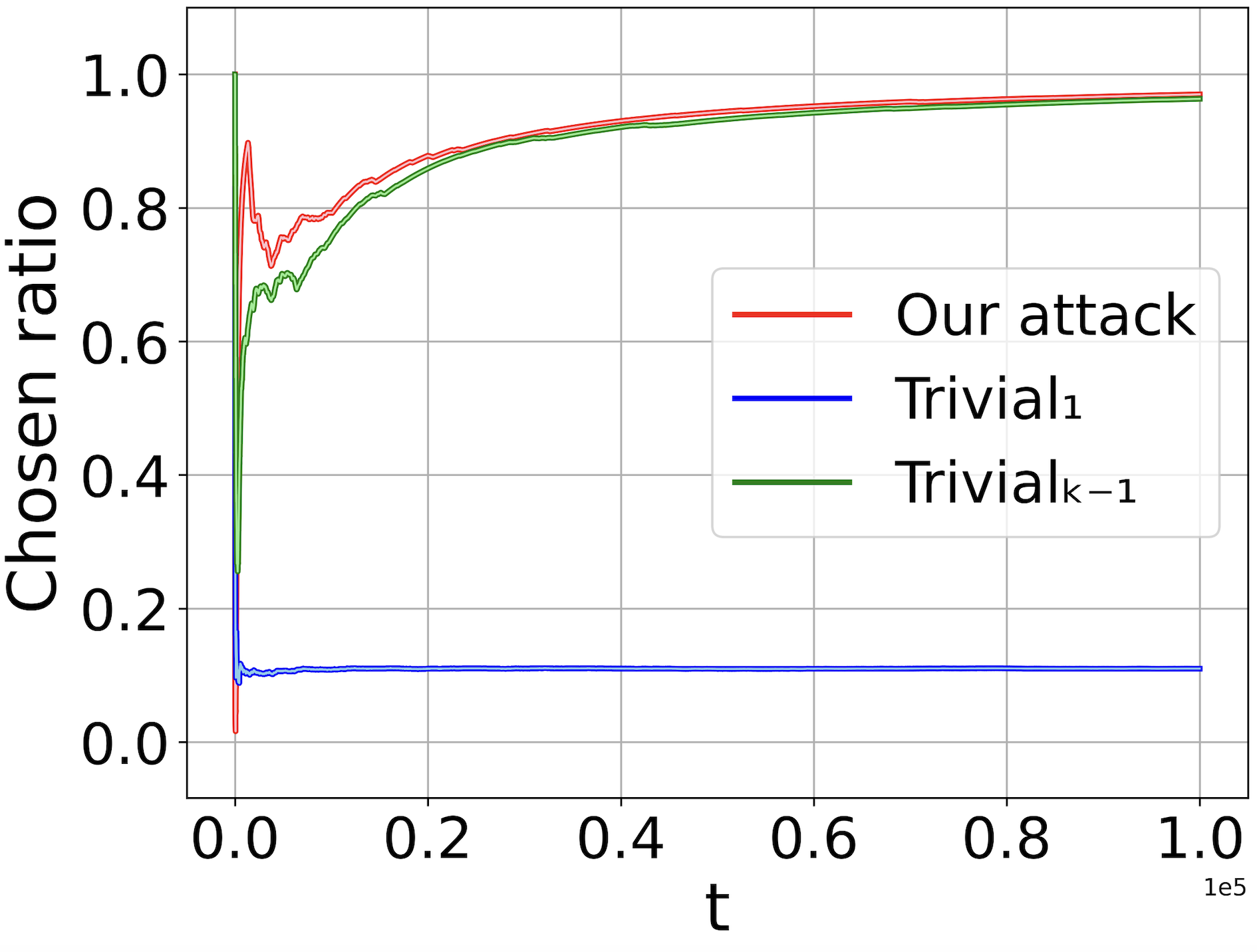}
		\caption{Chosen ratio (real)}
		\label{fig:RealData_PBM_Ratio}
	\end{subfigure}
 	\begin{subfigure}[b]{0.235\textwidth}
		\centering
		\includegraphics[width=\textwidth]{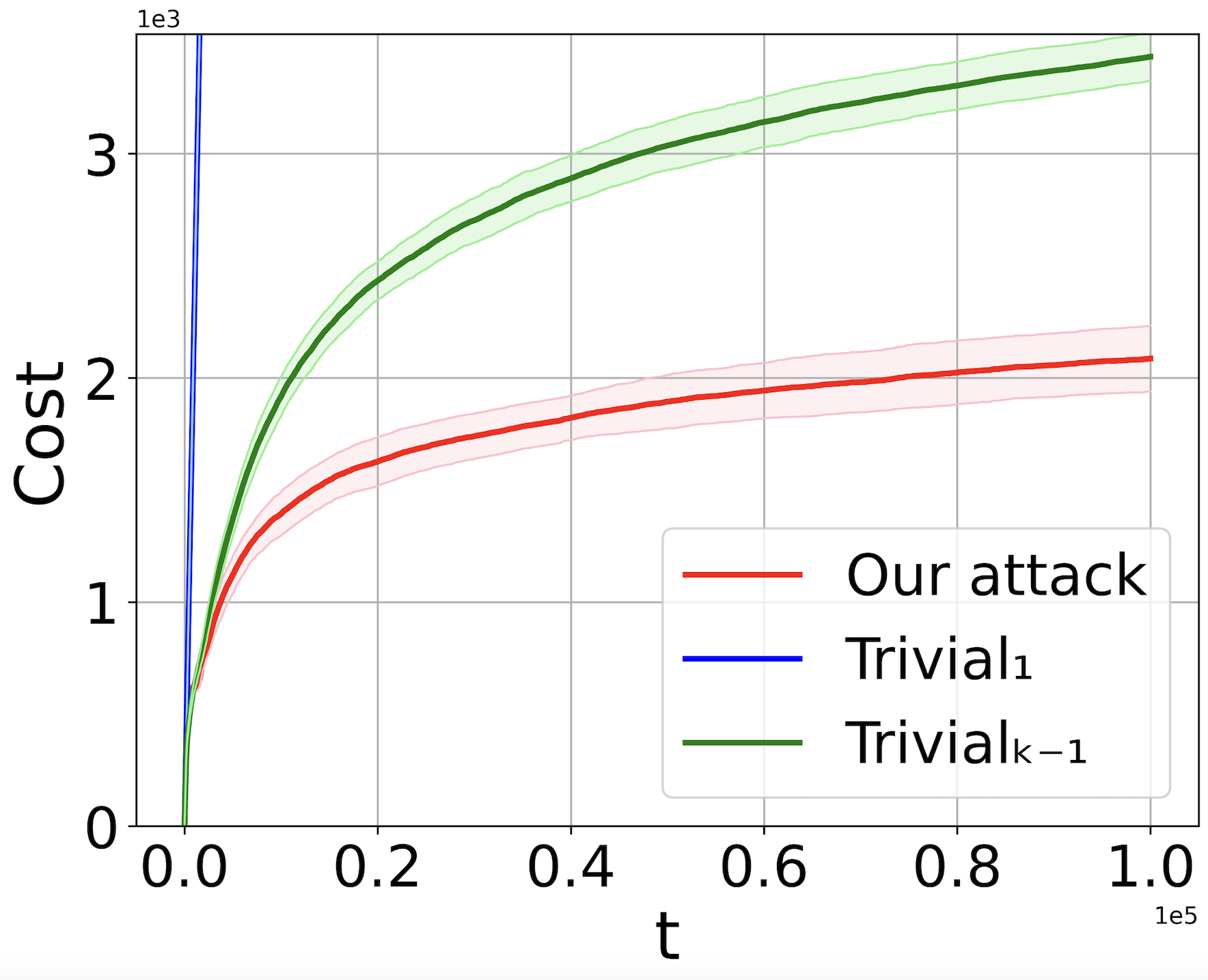}
		\caption{Cost (real)}
		\label{fig:RealData_PBM_Cost}
	\end{subfigure}
	\caption{Attacks against PBM-UCB.}
	\label{fig:RealData_PBM}
\end{figure}
We conduct experiments on both synthetic and real data (MovieLens 20M dataset~\cite{movielens}). Due to the space limit, we only report the results of OLTR with the position-based model. We use $\epsilon = 0.1$ for the PBM-UCB algorithm.
For the synthetic data, we take $L = 16, K = 8, T = 100,000$; $\{\mu_i\}_{i=1}^{L}$ are sampled from uniform distribution $U(0, 1)$. For the real data, we take $L = 100, K = 10, T = 100,000$; $\{\mu_i\}_{i=1}^{L}$ are extracted according to on~\cite{li2019online}. 
We compare our algorithm with two baselines: $\texttt{Trivial}_1$ attacks all arms except the target arm as long as the attack is valid; $\texttt{Trivial}_{K-1}$ first randomly takes $K-1$ arms out to generate a set $\bm{a}^*$ as our algorithm and then attacks all arms outside the set as long as the attack is valid. \Cref{fig:RealData_PBM} shows that $\texttt{Trivial}_{1}$ algorithm cannot successfully attack PBM-UCB even with linear costs. 
Our algorithm and $\texttt{Trivial}_{K - 1}$ have similar performance on the chosen ratio of the target arm as shown in \Cref{fig:SynData_PBM_Ratio,fig:RealData_PBM_Ratio}, which is the chosen time of the target arm divided by the current round. However, our algorithm pays $50\%$ and $40\%$ less cost than $\texttt{Trivial}_{K - 1}$ in \Cref{fig:SynData_PBM_Cost,fig:RealData_PBM_Cost} respectively, which validates the necessity of the attack design.

\section{Concluding Remarks}
% \adam{add discussion of Limitations}
This paper initiates the first study of adversarial attacks on OLTR with click feedback. We propose attack strategies that can successfully mislead different OLTR algorithms under different click models. One limitation of our work is that it cannot be applied to feedback models without clicks such as top-$K$ feedback~\cite{chaudhuri2017online}. Though we provide theoretical results for all these strategies, as discussed in \Cref{sec:ucb_analysis}, the finite-time cost results might be further improved by a more fine-grained analysis on $h_a(t)$.
Also, there is no known lower bound of the cumulative attack cost even for the stochastic bandit setting in literature, so it is unclear whether our attack strategies are (asymptotically) order-optimal. It also opens up a number of future directions. One is to design attack strategies for other OLTR algorithms and feedback models. Lastly, our study on the vulnerability of existing OLTR algorithms inspires robust algorithm design against adversarial attacks  OLTR.

\bibliographystyle{unsrt}
\bibliography{0_main.bib}

\clearpage
\appendix
\section*{Appendix}
\section{Proofs}
\subsection{Proof of \texorpdfstring{\Cref{lemma:binary}}{L1}}
\begin{proof}
When $\gamma_t \le r_t^0$, it is easy to check $\alpha_t = \lceil \gamma_t  \rceil \le r_t^0$, thus $r_t \in \{0,1\}$. 

When $\gamma_t > r_t^0$, if $\tilde{\gamma}_t \leq r_t^0$, we will have $\alpha_t = \lceil \tilde{\gamma}_t  \rceil \le r_t^0$ and $r_t \in \{0,1\}$. Since $\tilde{\gamma}_t$ depends on $a_t$, our goal is to show $\tilde{\gamma}_t \leq r_t^0$ for any $a_t$. Consider any arm $a \neq K$.   
We denote $t_{a,j}$ as the $j$-th time that the UCB algorithm played $a$. Since the UCB algorithm would play each arm one time in the beginning, we have
\begin{equation}
    \tilde{\gamma}_{t_{a,1}} \le \left[N_{a}(t_{a,1}) \hat{\mu}^0_{a}(t_{a,1}) \right]_{+} = \left[r^0_{t_{a,1}}\right]_{+} \le r^0_{t_{a,1}}.
\end{equation}
Next, we want to show that if $\tilde{\gamma}_{t_{a,j}} \le r^0_{t_{a,j}}$, $\tilde{\gamma}_{t_{a,j+1}} \le r^0_{t_{a,j+1}}$. Since $\tilde{\gamma}_{t}$ also depends on $h_{a_t}(t-1)$, we consider two cases:

1) $\gamma_{t_{a,j}} \le r_{t_{a,j}}^0$. In this case, we have $h_{a}(t_{a,j+1}-1) = t_{a,j}$. We can bound $\tilde{\gamma}_{t_{a,j+1}}$ using $\gamma_{t_{a,j}}$:
\begin{equation}
    \tilde{\gamma}_{t_{a,j+1}} \le \gamma_{t_{a,j}} + r^0_{t_{a,j+1}} - \alpha_{t_{a,j}} - \left[\underline{\mu}_L(t_{a,j}) - \Delta_0\right]_{+} \le  r^0_{t_{a,j+1}},
\end{equation}
where the second inequality is due to $\gamma_{t_{a,j}} \le \alpha_{t_{a,j}}$.

2) $\gamma_{t_{a,j}} > r_{t_{a,j}}^0$. In this case, we have $h_{a}(t_{a,j+1}-1) = h_{a}(t_{a,j}-1)$. We can bound $\tilde{\gamma}_{t_{a,j+1}}$ using $\tilde{\gamma}_{t_{a,j}}$
\begin{equation}
    \tilde{\gamma}_{t_{a,j+1}} \le \tilde{\gamma}_{t_{a,j}} + r^0_{t_{a,j+1}} - \alpha_{t_{a,j}} - \left[\underline{\mu}_L(h_{a}(t_{a,j}-1)) - \Delta_0\right]_{+} \le  r^0_{t_{a,j+1}},
\end{equation}
where the second inequality is due to $\tilde{\gamma}_{t_{a,j}} \le r^0_{t_{a,j}}$ and $\tilde{\gamma}_{t_{a,j}} \le \alpha_{t_{a,j}}$.

Since $\tilde{\gamma}_{t_{a,1}} \le r^0_{t_{a,1}}$, by induction, we have $\tilde{\gamma}_{t_{a,j}} \le r^0_{t_{a,j}}$ for any $a, j$, which concludes the proof.
\end{proof}

\subsection{Proof of \texorpdfstring{\Cref{thm:binary}}{T1}}
\begin{proof}
We use the following two lemmas to prove \Cref{thm:binary}. The proof is similar to those of Lemma 5 and Lemma 6 in \cite{jun2018adversarial}, while as discussed in \Cref{sec:binary}, the algorithm in \cite{jun2018adversarial} relies on \cref{eq:old_alpha} that may not hold due to the binary feedback, and our proof is mainly based on \cref{eq:new_alpha} that can be ensured by the conservative estimations.
Define event $E:= \{\forall i,\forall t > L: |\hat{\mu}^0_{i}(t)-\mu_i| < \beta(N_{i}(t))\}$. With Hoeffding’s inequality, it is easy to prove for $\delta \in (0,1)$, $\mathbb{P}(E) > 1 - \delta$.
\begin{lemma}\label{lemma:time}
Assume event $E$ holds and $\delta \le 1/2$. For any $a \neq L$ and any $t \ge L$, we have
\begin{equation}
    N_a(t) \le \min\{N_L(t), 1+\frac{3}{\Delta_0^2} \log t\}
\end{equation}
\end{lemma}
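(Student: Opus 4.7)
The plan is to adapt the proof of Lemma 5 in \cite{jun2018adversarial} to the binary-feedback regime, where the strong inequality \cref{eq:old_alpha} no longer holds on every round and must be replaced by the conservative inequality \cref{eq:new_alpha}. The key observation that makes this substitution essentially costless is twofold: $\beta(\cdot)$ is monotonically decreasing in $N$, and the attacker never modifies the target arm's feedback (so $\hat{\mu}_L = \hat{\mu}_L^0$), which lets event $E$ be applied to $\hat{\mu}_L$ directly at arbitrary past timestamps.

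First, I fix $a \neq L$ and let $\tau \le t$ be the last round on which UCB pulls arm $a$; then $N_a(t) = N_a(\tau)$ and $\hat{\mu}_a(t) = \hat{\mu}_a(\tau)$. Applying \cref{eq:new_alpha} at round $\tau$ together with $\hat{\mu}_L(h_a(\tau-1)) = \hat{\mu}_L^0(h_a(\tau-1)) \le \mu_L + \beta(N_L(h_a(\tau-1)))$ from event $E$ gives
\begin{equation*}
\hat{\mu}_a(t) \;\le\; \mu_L - \beta(N_L(h_a(\tau-1))) - \Delta_0.
\end{equation*}
This plays the same role as the analogous step in \cite{jun2018adversarial}, except that the concentration radius is indexed by $h_a(\tau-1)$ rather than by $\tau-1$.

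Second, because UCB selected arm $a$ at round $\tau$, the index comparison $\hat{\mu}_a(\tau-1) + B_a(\tau-1) \ge \hat{\mu}_L(\tau-1) + B_L(\tau-1)$ must hold, where $B_i(s)$ denotes the $(4.5,\psi)$-UCB exploration bonus $\sqrt{2.25\,\log(s)/N_i(s)}$ for $\psi(\lambda)=\lambda^2/8$. Using the mean bound above for $a$ and $\hat{\mu}_L(\tau-1) \ge \mu_L - \beta(N_L(\tau-1))$ for $L$, the $\mu_L$ terms cancel and I obtain
\begin{equation*}
B_a(\tau-1) \;\ge\; \Delta_0 \;+\; \bigl[\beta(N_L(h_a(\tau-1))) - \beta(N_L(\tau-1))\bigr] \;+\; B_L(\tau-1) \;\ge\; \Delta_0,
\end{equation*}
since $h_a(\tau-1) \le \tau-1$ makes the bracketed term nonnegative by monotonicity of $N_L$ and $\beta$, and $B_L \ge 0$. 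Inverting the bonus and using $\tau \le t$ yields $N_a(\tau-1) \le 2.25\,\log t/\Delta_0^2$, and hence $N_a(t) = N_a(\tau-1) + 1 \le 1 + 3\log t/\Delta_0^2$.

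Third, to obtain $N_a(t) \le N_L(t)$, I argue by contradiction at the same round $\tau$. If $N_L(\tau-1) < N_a(\tau-1)$, then $B_L(\tau-1) > B_a(\tau-1)$; combined with the strictly negative mean gap $\hat{\mu}_a(\tau-1) - \hat{\mu}_L(\tau-1) \le -\Delta_0 < 0$ derived above, this forces $\text{UCB}_a(\tau-1) < \text{UCB}_L(\tau-1)$, contradicting the choice of $a$ at $\tau$. Therefore $N_a(t) = N_a(\tau) \le N_L(\tau-1) \le N_L(t)$, completing the minimum.

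The main obstacle is the weakening from $\beta(N_L(\tau-1))$ in \cref{eq:old_alpha} to $\beta(N_L(h_a(\tau-1)))$ in \cref{eq:new_alpha}: a careless handling would leave an unsigned residual term $\beta(N_L(h_a(\tau-1))) - \beta(N_L(\tau-1))$ that could swamp the $\Delta_0$ gap required for inverting the bonus. The remedy is the monotonicity observation, which turns that residual into a free nonnegative quantity, combined with the fact that target-arm feedback is never perturbed so event $E$ applies to $\hat{\mu}_L$ at any past index without additional analysis. The restriction $\delta \le 1/2$ and $T \ge L$ is only used to ensure that the union-bound event $E$ has the claimed probability and that each arm has been sampled at least once by UCB's initialization, so that $N_i(t) \ge 1$ when the bonus and concentration bounds are invoked.
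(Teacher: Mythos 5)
Your proof follows essentially the same route as the paper's: decompose at the last pull of $a$, invoke the conservative attack guarantee \cref{eq:new_alpha}, use event $E$ together with the monotonicity of $N_L$ and $\beta$ to absorb the mismatch between the timestamp $h_a(\cdot)$ and the current round, and then invert the UCB bonus. Two small indexing slips should be repaired: the guarantee you invoke bounds $\hat{\mu}_a(\tau)$, whereas the UCB index comparison at round $\tau$ involves $\hat{\mu}_a(\tau-1)$, so you should apply \cref{eq:new_alpha} at the \emph{previous} pull $\tau'$ of $a$ and use $\hat{\mu}_a(\tau-1)=\hat{\mu}_a(\tau')$ and $N_a(\tau-1)=N_a(\tau')$ (the case where $\tau$ is the initialization pull is trivial since then $N_a(t)=1$). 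Also, the contradiction in your third step only yields the non-strict $N_a(\tau-1)\le N_L(\tau-1)$, hence $N_a(t)\le N_L(t)+1$; the strict inequality you need already follows from your second step, since $B_a(\tau-1)\ge \Delta_0+B_L(\tau-1)>B_L(\tau-1)$ forces $N_a(\tau-1)<N_L(\tau-1)$ and thus $N_a(t)=N_a(\tau-1)+1\le N_L(\tau-1)=N_L(t)$, which is exactly how the paper concludes.
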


\begin{proof}
Fix some $t \ge L$. We consider $a_t = a \neq L$ and denote $t' = \max \{ s < t: a_s = a \}$ as the previous round that $a$ was pulled. Since arm $a$ should be pulled at least once by the UCB algorithm, we have $t' \ge 1$, and the attack in round $t'$ would ensure
\begin{equation}\label{eq:binary_lower}
    \hat{\mu}_{a}(t') \le \underline{{\mu}}_{L}(h_{a}(t'))-\Delta_0.
\end{equation}
Since arm $a$ was pulled in round $t$, we know its UCB value must be greater than that of target arm $L$: 
\begin{equation}\label{eq:binary_UCB}
    \hat{\mu}_{a}(t - 1)+\frac{3}{2}\sqrt{\frac{\log (t)}{N_{a}(t - 1)}} \ge \hat{\mu}_{L}(t - 1)+\frac{3}{2}\sqrt{\frac{\log (t)}{N_{L}(t - 1)}}.
\end{equation}
Considering that $t'$ and $t$ are two consecutive rounds when arm $a$ was pulled, we have $\hat{\mu}_{a}(t - 1)=\hat{\mu}_{a}(t')$ and $N_{a}(t - 1)=N_{a}(t')$. Rearranging \cref{eq:binary_UCB}, we get
\begin{equation}
    \frac{3}{2}\sqrt{\frac{\log (t)}{N_{a}(t')}}-\frac{3}{2}\sqrt{\frac{\log (t)}{N_{L}(t-1)}} \ge \hat{\mu}_{L}(t-1)-\hat{\mu}_{a}(t')
    \ge \hat{\mu}_{L}(t-1)-(\underline{{\mu}}_{L}(h_{a}(t'))-\Delta_0) 
    \ge \Delta_0,
\end{equation}
where the second inequality comes from \Cref{eq:binary_lower}. Since $\Delta_0 > 0$, we have
\begin{equation}
    N_a(t)=N_a(t')+1 \le N_L(t-1)=N_L(t).
\end{equation}
Also, since $\frac{3}{2}\sqrt{\frac{\log (t)}{N_{L}(t-1)}}>0$, we have $\frac{3}{2}\sqrt{\frac{\log (t)}{N_{a}(t')}}>\Delta_0$, which implies
\begin{equation}
    N_a(t)=N_a(t')+1 \le 1+\frac{3}{{\Delta_0}^2}\log(t).
\end{equation}
\end{proof}

\begin{lemma}\label{lemma:cost}
Assume event $E$ holds and $\delta \le 1/2$.

1) For any $t \ge  L$, the cumulative attack cost to any fixed arm $a \neq L$ can be bounded as:
\begin{equation}\label{eq:alpha_sum}
\sum_{s\in \tau_{a}(t)} \alpha_s \le N_a(t) \left(\Delta_a + \Delta_0 + 3\beta(N_L(h_a(t)) + \beta(N_a(t))\right) + 1
\end{equation}

2) When t goes to infinity, we have
\begin{equation}
\lim_{t \rightarrow \infty} \frac{\sum_{s\in \tau_{a}(t)} \alpha_s}{\log t} \leq \frac{3}{\Delta_0^2} (\Delta_a + \Delta_0)
\end{equation}
\end{lemma}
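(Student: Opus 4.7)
The plan rests on the identity
$$\sum_{s \in \tau_a(t)} \alpha_s \;=\; N_a(t)\bigl(\hat{\mu}_a^0(t) - \hat{\mu}_a(t)\bigr),$$
so it suffices to upper-bound $\hat{\mu}_a^0(t)$ and lower-bound $\hat{\mu}_a(t)$. Under event $E$ the first is immediate: $\hat{\mu}_a^0(t) \le \mu_a + \beta(N_a(t))$. For the second, let $s^\star = \max\tau_a(t)$ (the case $\tau_a(t) = \emptyset$ is vacuous). In both branches of \Cref{alg:UCB}, $\alpha_{s^\star} \le \gamma_{s^\star} + 1$ or $\alpha_{s^\star} \le \tilde{\gamma}_{s^\star} + 1$ by the ceiling definition, and unwinding \cref{eq:gamma_t} or \cref{eq:ti_gamma_t} accordingly, I would establish the invariant
$$\hat{\mu}_a(t)\;\ge\;\bigl[\underline{\mu}_L(h_a(t)) - \Delta_0\bigr]_+ \;-\; \frac{1}{N_a(t)},$$
where the $h_a(t)$ timestamp on the right reflects whichever of $s^\star$ or $h_a(s^\star-1)$ was used in the branch actually taken by \Cref{alg:UCB}.

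To pass from empirical to true means, I will use the crucial fact that the target arm $L$ is never attacked, so $\hat{\mu}_L \equiv \hat{\mu}_L^0$; under event $E$,
$$\underline{\mu}_L(h_a(t)) \;=\; \hat{\mu}_L^0(h_a(t)) - 2\beta(N_L(h_a(t))) \;\ge\; \mu_L - 3\beta(N_L(h_a(t))).$$
Dropping the positive part via $-[x]_+ \le -x$ and substituting $\Delta_a = \mu_a - \mu_L$ into the main identity delivers
$$\sum_{s \in \tau_a(t)} \alpha_s \;\le\; N_a(t)\bigl(\Delta_a + \Delta_0 + \beta(N_a(t)) + 3\beta(N_L(h_a(t)))\bigr) + 1,$$
the $+1$ originating from the single ceiling slack at the final round $s^\star$. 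This proves part (1).

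For part (2), I would feed \Cref{lemma:time}'s bound $N_a(t) \le 1 + (3/\Delta_0^2)\log t$ into part (1) and divide by $\log t$. The prefactor $N_a(t)/\log t$ tends to at most $3/\Delta_0^2$, so it remains to check the two $\beta$ corrections are $o(1)$. If $a$ is pulled only finitely often, $\sum_{s \in \tau_a(t)} \alpha_s$ is eventually constant and the ratio tends to $0$ trivially; otherwise $N_a(t) \to \infty$ forces $\beta(N_a(t)) \to 0$. \Cref{lemma:time} also yields $\sum_{a<L} N_a(t) \le (L-1)N_L(t)$, so $N_L(t) \ge t/L \to \infty$, and one further argues that $h_a(t) \to \infty$ along the subsequence of pulls of $a$ to conclude $\beta(N_L(h_a(t))) \to 0$. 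The principal obstacle is verifying the lower-bound invariant on $\hat{\mu}_a(t)$ uniformly across both algorithmic branches and across the sub-cases where the outer positive-part in \cref{eq:gamma_t} or \cref{eq:ti_gamma_t} clips to zero (so that $\alpha_{s^\star} = 0$); there one must check that the prior accumulation $\sum_{s \in \tau_a(s^\star-1)} \alpha_s$ already meets the lower bound without needing any ceiling correction at $s^\star$ itself. The secondary obstacle, needed only for the asymptotic statement, is ruling out that $h_a(\cdot)$ stagnates at a finite value while $a$ is still pulled infinitely often.
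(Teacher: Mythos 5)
Your plan follows essentially the same route as the paper's proof. The identity $\sum_{s\in\tau_a(t)}\alpha_s = N_a(t)(\hat{\mu}_a^0(t)-\hat{\mu}_a(t))$ is just a rewriting of the paper's direct unwinding of the definitions of $\gamma_t$ and $\tilde{\gamma}_t$: the paper bounds $\sum_{s\in\tau_a(t)}\alpha_s \le N_a(t)\hat{\mu}_a^0(t) - N_a(t)[\underline{\mu}_L(h_a(t-1))-\Delta_0]_+ + 1$ and then passes to true means under $E$ exactly as you propose, producing the same $3\beta(N_L(h_a(t)))+\beta(N_a(t))$ correction and the same $+1$ ceiling slack. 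Part (2) is likewise identical in structure: feed \Cref{lemma:time} into part (1) and argue that the $\beta$ terms vanish.

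The one substantive step you defer --- ruling out that $h_a(\cdot)$ stagnates at a finite value --- is precisely where the paper does its real work, and it is not optional: if $h_a(t)$ froze at some finite $h^*$ while $a$ is still pulled so that $N_a(t)/\log t \to 3/\Delta_0^2$, the term $3\beta(N_L(h^*))$ would survive the limit and the asymptotic constant would be $\frac{3}{\Delta_0^2}(\Delta_a+\Delta_0+3\beta(N_L(h^*)))$ rather than the claimed $\frac{3}{\Delta_0^2}(\Delta_a+\Delta_0)$. The paper closes this by a two-case analysis of $\lim_{t\to\infty}\gamma_t$ (according to the sign of $\underline{\mu}_L(t)-\Delta_0$), showing that asymptotically $\gamma_t\le r_t^0$ always holds, so the first branch of \Cref{alg:UCB} is eventually taken, $h_a(t)=t$, and hence $\beta(N_L(h_a(t)))\to 0$. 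Your other flagged obstacle (the sub-case where the outer positive part in \cref{eq:gamma_t} or \cref{eq:ti_gamma_t} clips to zero) is real but benign: clipping occurs only when the accumulated attacks already exceed the target, the sum does not grow at that round, and the invariant survives up to a bounded additive constant, so it affects neither the order of the finite-time bound nor the asymptotics. As written, then, your proposal is a correct plan along the paper's own lines, but the decisive limit argument for $h_a$ is the piece you would still need to supply.
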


\begin{proof}
Fix any arm $a \neq L$. By the definitions of $\gamma_t$ and $\tilde{\gamma}_t$, it follows that:
\begin{align}  
    \sum_{s\in \tau_{a}(t)} \alpha_s & \le \sum_{s\in \tau_{a}(t-1)} \alpha_s+1 \\
    & \le N_{a}(t) \hat{\mu}^0_{a}(t) - N_{a}(t) \left[\underline{\mu}_L(h_{a}(t-1)) - \Delta_0\right]+1 \\
    & \le N_{a}(t) \left[\Delta_a + 3\beta(N_L(h_a(t))) + \beta(N_a(t))\right] + 1,
\end{align}
where the last inequality is due to the decrease of $\beta$.

For the asymptotic result, combining \Cref{lemma:time} with \Cref{{eq:alpha_sum}}, we have
\begin{align}
    \lim_{t \rightarrow \infty} \frac{\sum_{s\in \tau_{a}(t)} \alpha_s}{\log t} & \le \lim_{t \rightarrow \infty} \frac{N_{a}(t)}{\log(t)} \left[\Delta_a + 3\beta(N_L(h_a(t))) + \beta(N_a(t))\right] + \frac{1}{\log(t)} \\
    & \le \lim_{t \rightarrow \infty} (\frac{1}{\log(t)}+\frac{3}{{\Delta_0}^2}) \left[\Delta_a + 3\beta(N_L(h_a(t))) + \beta(N_a(t))\right] + \frac{1}{\log(t)}.
\end{align}
It is easy to check that $\lim_{t \rightarrow \infty}\beta(t)=0$. Hence, to get the asymptotic cost bound in the lemma, we need to prove $\lim_{t \rightarrow \infty}\beta(N_L(h_a(t)))=0$. We find $\lim_{t \rightarrow \infty}h_a(t)=t$ is a sufficient condition for it; in other words, $h_a(t)$ should be always updated when $t$ goes to infinity. To obtain this, we consider two cases of ${\gamma}_t$.

1) If $\underline{\mu}_L(t) - \Delta_0 \ge 0$, we have
\begin{align*}
    \lim_{t \rightarrow \infty}{\gamma}_t
     =& \lim_{t \rightarrow \infty}(N_a(t)\hat{{\mu}}_a^0(t) - \sum_{s\in \tau_{a}(t-1)} \alpha_s - N_a(t)(\hat{\mu}_L(t) -2\beta(N_L(t))-\Delta_0)) \\
     =& \lim_{t \rightarrow \infty}(N_a(t)\hat{{\mu}}_a^0(t) - N_a(t')(\hat{\mu}_L(h_a(t')) -2\beta(N_L(h_a(t'))) -\Delta_0) + r^0_t - \alpha_{t'} \\
    & +  N_a(t')(\hat{\mu}_L(h_a(t')) -2\beta(N_L(h_a(t'))) -\Delta_0) - N_a(t)(\hat{\mu}_L(t) -2\beta(N_L(t))-\Delta_0)) \\
     \le& \lim_{t \rightarrow \infty} (r^0_t - \alpha_{t'} + N_a(t')(\hat{\mu}_L(h_a(t')) -2\beta(N_L(h_a(t'))) -\Delta_0) - N_a(t)(\hat{\mu}_L(t) -2\beta(N_L(t))-\Delta_0)) \\
     \le & r^0_t + \lim_{t \rightarrow \infty} (N_a(t')(\hat{\mu}_L(h_a(t')) - \hat{\mu}_L(t) - 2\beta(N_L(h_a(t')))) - (\hat{\mu}_L(t) - \Delta_0)) \\
     \le & r^0_t.
\end{align*}

The first inequality is due to the attack at round $t'$. The last inequality is due to the confidence radius based on $\beta(N_L(h_a(t')))$ and $\beta$ is decreasing.

2) If $\underline{\mu}_L(t) - \Delta_0 < 0$, we have $\hat{\mu}_L(t) < 2\beta(N_L(t)) + \Delta_0$ and $\hat{\mu}_L(h_a(t')) \le \hat{\mu}_L(t) + 2\beta(N_L(h_a(t')))$.
\begin{align*}
    \lim_{t \rightarrow \infty}{\gamma}_t
    & = r^0_t + \lim_{t \rightarrow \infty}(N_a(t')\hat{\mu}_a^0(t') - \sum_{s\in \tau_{a}(t-1)} \alpha_s) \\
    & = r^0_t + \lim_{t \rightarrow \infty}N_a(t')\hat{\mu}_a(t') \\
    & \le r^0_t + \lim_{t \rightarrow \infty}N_a(t')(\hat{\mu}_L(h_a(t')) - 2\beta(N_L(h_a(t'))) - \Delta_0) \\
    & \le r^0_t + \lim_{t \rightarrow \infty}N_a(t')2\beta(N_L(t)) \\
    & = r^0_t.
\end{align*}
Since in both cases, ${\gamma}_t \le r^0_t$, we have $h_a(t) = t$ when $t$ goes to infinity.
\end{proof}
With \Cref{lemma:time} and \Cref{lemma:cost}, the proof is completed by summing the corresponding upper bounds over all non-target arms $a < L$, 
\end{proof}

\subsection{Proof of \texorpdfstring{\Cref{thm:pbm}}{T2}}\label{sec:proof_T2}
% \textit{Proof sketch}. Whenever $a_{i,t} \notin \bm{a}^*$ is chosen by the player, there must exist $a \in \bm{a}^*$ such that $\bar{\mu}_{a_{i,t}}(t) \ge \bar{\mu}_{a}(t)$. The output $\alpha_{i,t}$ of \Cref{alg:cal_alpha} would ensure $\hat{\mu}_{a_{i,t}(t)} \le \hat{\mu}_{a}(h_{a_{i,t}, a}(t-1)) - 2\beta(N_a(h_{a_{i,t}, a}(t-1))) - \Delta_0$, owing to the conservative estimations in $\gamma_t(a_{i,t},a)$ and $ \tilde{\gamma}_t(a_{i,t},a)$. Combining these two inequalities, we can get $B_{a_{i,t}}(t) - B_{a}(t) \ge \Delta_0$. With a careful calculation on this inequality involved bias-corrected counters, we have $N_{a_{i,t}}(t) \le \left(\frac{1+\epsilon}{2 \kappa^2_{K}\Delta_0^2} \log t\right)$. This result holds for any $a_{i,t} \notin \bm{a}^*$. Thus, we immediately get the bound of $N_L(t)$. The remaining proof for the cost bound will be similar to that of \Cref{thm:binary}. 
\begin{proof}
As for \Cref{alg:UCB}, we first need to prove the post-attack feedback of \Cref{alg:PBM-UCB} is always valid. It is equivalent to showing that the output $\alpha$ of \Cref{alg:cal_alpha} is a valid attack value on the input pre-attack feedback $r^0$, i.e., $\alpha \le r^0$. Similar to the proof of \Cref{lemma:binary}, we consider two cases: when $\gamma_{\max} \le r^0$, $\alpha =  \lceil \gamma_{\max}  \rceil \le r^0$; when $\gamma_{\max} < r^0$, we can use the same inductive proof of \Cref{lemma:binary} to show that $\alpha =  \lceil \tilde{\gamma}_{\max}  \rceil \le r^0$. Thus, the post-attack feedback $r^0 - \alpha$ is always valid. 

Fix some $t \ge L$ such that $a_{i,t} \notin \bm{a}^*$. We denote $t' = \max \{s < t: a_{i,t} \in \bm{a}_s \}$ as the previous round that $a_{i,t}$ was chosen.
With the conservative estimations in $\gamma_t(a_{i,t},a)$ and $ \tilde{\gamma}_t(a_{i,t},a)$, the output $\alpha_{i,t}$ of \Cref{alg:cal_alpha} could ensure 
\begin{equation}\label{eq:PBM-inequ}
    \hat{\mu}_{a_{i,t}}(t) \le \hat{\mu}_{a}(h_{a_{i,t}, a}(t-1)) - 2\beta(N_a(h_{a_{i,t}, a}(t-1))) - \Delta_0,
\end{equation}
for any $a\in\bm{a}^*$.
Since $a_{i,t}$ is chosen by the PBM-UCB algorithm, there must exist $a\in\bm{a}^*$ with its UCB value less than that of $a_{i,t}$:
\begin{equation}\label{eq:PBM-UCB}
    \hat{\mu}_{a}(t-1) + B_{a}(t) \le \hat{\mu}_{a_{i,t}}(t-1) + B_{a_{i,t}}(t).
\end{equation}
Since $t'$ and $t$ are two consecutive rounds when item $a_{i,t}$ was chosen, we have $\hat{\mu}_{a_{i,t}}(t - 1)=\hat{\mu}_{a_{i,t}}(t'), N_{a_{i,t}}(t - 1)=N_{a_{i,t}}(t')$. Rearranging \cref{eq:PBM-UCB}, we have
\begin{equation}
    B_{a_{i,t}}(t) - B_{a}(t) \ge \hat{\mu}_{a}(t-1) - \hat{\mu}_{a_{i,t}}(t') \ge \Delta_0,
\end{equation}
where the last inequality is due to \cref{eq:PBM-inequ}. Since $B_{a_{i,t}}(t) \ge \Delta_0$, with the definition of $B_{a_{i,t}}(t)$ in \cref{eq:PBM-UCB-value}, we have
\begin{equation}
    N_{a_{i,t}}(t) \le \frac{1+\epsilon}{2 \kappa^2_{K}\Delta_0^2} \log t,
\end{equation}
for any $a_{i,t} \notin \bm{a}^*$. Thus, for the target item $L$, 
\begin{equation}
    N_L(T) \ge T - \sum_{l \notin \bm{a}^*} N_{l}(T) \ge T - (L-K) \left(\frac{1+\epsilon}{2 \kappa^2_{K}\Delta_0^2} \log T\right),
\end{equation}
which guarantees the chosen time of the target item. 

For the cumulative attack cost analysis, we consider any arm $a_{i,t} \notin \bm{a}^*$. Since \cref{eq:PBM-inequ} holds for any $a \in \bm{a}^*$, we take $a = L$ and get
\begin{align}  
    \sum_{s\in \tau_{a_{i,t}}(t)} \alpha_{a_{i,t},s} &\le N_{a_{i,t}}(t) \left[\Delta_{a_{i,t}} + \Delta_0 + 3\beta(N_L(h_{a_{i,t}, L}(t))) + \beta(N_{a_{i,t}}(t))\right]\\
    &\le N_{a_{i,t}}(t) \left[\Delta_{a_{i,t}} + \Delta_0 + 4\beta(N_{a_{i,t}}(h_{a_{i,t}, L}(t)))\right],
\end{align}
where the second line is due to $N_{a_{i,t}}(t) \le N_{L}(t), h_{a_{i,t}, L}(t) \le N_{a_{i,t}}(t)$, and $\beta$ is decreasing. Based on this inequality, we can follow the same steps in the proof of \Cref{lemma:cost} to derive the upper bound of the cumulative attack cost for \Cref{alg:PBM-UCB}.
\end{proof}

\subsection{Proof of \texorpdfstring{\Cref{thm:cascade}}{T3}}
% \textit{Proof sketch}. As mentioned above, we use a new regret-based analysis. The intuition is that the regret caused by any suboptimal item is the product of its reward gap with respect to the optimal list and its expected number of recommendations. If we know the regret upper bound and the reward gap lower bound, we can derive the upper bound of the expected number of recommendations. To do so, we first show that the post-attack problem can be viewed as a problem with known reward gap lower bound. This can be obtained by $\hat{\mu}_{a_{i,t}(t)} \le \hat{\mu}_{L}(h_{L_{i,t}, a}(t-1)) - 2\beta(N_a(h_{a_{i,t}, L}(t-1))) - \Delta_0$ for any $a_{i,t} \notin \bm{a}^*$, which indicates the post-attack expected reward gap between $a_{i,t}$ and $L$ is always larger than $\Delta_0$. Then, the lower bound of the reward gap of $a_{i,t}$ with respect to the optimal list is $p^* \Delta_0$. Based on the regret upper bound $12\log T/ \Delta_0 $ given in \cite{kveton2015cascading}, we can get the upper bound of the expected number of recommendations is $\frac{12}{p^{*}\Delta_0^2} \log T$. For the remaining cost analysis, since the attack cost only depends on the observed items, we can still follow the proof of \Cref{thm:pbm}.
\begin{proof}
Since \Cref{alg:CascadeUCB} also calls \Cref{alg:cal_alpha}, we can use the same proof in \Cref{sec:proof_T2} to show its post-attack feedback is always valid. Also, the output $\alpha_{i,t}$ of \Cref{alg:cal_alpha} would still ensure \cref{eq:PBM-inequ} for any $a_{i,t} \notin \bm{a}^*, a \in \bm{a}^*$. Based on this, we can consider the post-attack problem as a cascading bandit problem with known expected reward gaps, where the expected reward of any item $l \notin \bm{a}^*$ is less than that of the target item $L$ by at least $\Delta_0$.
Since $L$ is the worst item in $\bm{a}^*$, with the regret analysis of CascadeUCB in \cite{kveton2015cascading}, we know the regret caused by any list containing $l \notin \bm{a}^*$ is bounded by
\begin{equation}\label{eq:R_upper}
    R_{l}(T) \le \frac{12}{\Delta_0} \log T.
\end{equation}
We also know the lower bound of the one-round instantaneous regret caused by any list containing $l$ is $p^* \Delta_0$, where $p^*:=\prod_{i=1}^{K-1} \mu_i$. We can then write the lower bound of $R_{l}(T)$ as
\begin{equation}\label{eq:R_lower}
    R_{l}(T) \ge p^{*}\Delta_0 \cdot \mathbb{E}[N_l(T)].
\end{equation}
Combinining \cref{eq:R_upper} and \cref{eq:R_lower}, we obtain the upper bound of the expected number of recommendations of item $l$:
\begin{equation}
    \mathbb{E}[N_l(T)] \le \frac{12}{p^{*}\Delta_0^2} \log T.
\end{equation}
Thus, for the target item $L$, 
\begin{equation}
    \mathbb{E}[N_L(T)] \ge T - \sum_{l \notin \bm{a}^*} \mathbb{E}[N_l(T)] \ge T - (L-K) \left(\frac{12}{p^{*}\Delta_0^2} \log T\right).
\end{equation}
For the remaining cost analysis, since the attack cost only depends on the observed items, we can still follow the proof of \Cref{thm:pbm}: we need to change the confidence radius $B_a(t)$ in \cref{eq:PBM-UCB-value} to the typical form $\frac{3}{2}\sqrt{\frac{\log t}{N_a(t-1)}}$, then derive the corresponding cost upper bound.
\end{proof}

\subsection{Proof of \texorpdfstring{\Cref{thm:general}}{T4}}
\begin{proof}
Since \Cref{alg:general} only attacks when $r^0_{i,t} = 1$, it is easy to verify its post-attack feedback is valid. The key step for proving its successful attack is to show that $p_{i,t}$ is an upper bound of $\Delta_{a_{i,t}} / \mu_{a_{i,t}}$: if it is true, by the probabilistic attack with probability $p_{i,t}$, \Cref{alg:general} makes a post-attack bandit problem where the expected reward of any item $l \notin \bm{a}^*$ is worse than that of target item $L$, which guarantees the number of target arm pulls is $T - O(\log T)$. Actually, we have 
\begin{equation}
    \frac{\Delta_{a_{i,t}}}{\mu_{a_{i,t}}} = \frac{\mu_{a_{i,t}} - \mu_{L}}{\mu_{a_{i,t}}} = 1 - \frac{\mu_{L}}{\mu_{a_{i,t}}},
\end{equation}
which increases as $\mu_{a_{i,t}}$ increases. Replacing $\mu_{a_{i,t}}$ with $\hat{\mu}^0_{a_{i,t}}(t) + \beta(N_{a_{i,t}}(t))$,
\begin{equation}
    \frac{\Delta_{a_{i,t}}}{\mu_{a_{i,t}}} \le  \frac{\hat{\mu}^0_{a_{i,t}}(t) + \beta(N_{a_{i,t}}(t)) - {\mu}_{L}}{\hat{\mu}^0_{a_{i,t}}(t) + \beta(N_{a_{i,t}}(t))} \le \frac{\left[\hat{\mu}^0_{a_{i,t}}(t) + \beta(N_{a_{i,t}}(t)) - \hat{\mu}^0_{L}(t) + \beta(N_{L}(t))\right]_{+}}{\hat{\mu}^0_{a_{i,t}}(t) + \beta(N_{a_{i,t}}(t))},
\end{equation}
where $\hat{\mu}^0_{a_{i,t}}(t) + \beta(N_{a_{i,t}}(t)) \ge \mu_{a_{i,t}}$ and $\hat{\mu}^0_{L}(t) + \beta(N_{L}(t)) \ge \mu_{L}$ under event $E$. Since the last term is the definition of $p_{i,t}$, we have proved it is a high-probability upper bound of $\Delta_{a_{i,t}} / \mu_{a_{i,t}}$.  
For the cost analysis, we have
\begin{align}
    C(T) &\le \sum_{i,t} \left[\hat{\mu}^0_{a_{i,t}}(t) + \beta(N_{a_{i,t}}(t)) - \hat{\mu}^0_{L}(t) + \beta(N_{L}(t))\right]_{+} \mathbb{I}\{a_{i,t} \notin \bm{a}^*\}\\
    &\le \sum_{i,t} \left[\mu_{a_{i,t}} + 2\beta(N_{a_{i,t}}(t)) - \mu_{L} + 2\beta(N_{L}(t))\right]_{+} \mathbb{I}\{a_{i,t} \notin \bm{a}^*\}\\
    &\le O \left(\sum_{a < L} \left(\Delta_{a} + 4\beta(1) \right) \log T \right),
\end{align}
which concludes the proof.
\end{proof}

\section{Additional Experiments}
\subsection{Synthetic Data}
We first consider adversarial attacks on $2$-armed stochastic bandits with binary feedback. The rewards of arms $1$ and $2$ follow Bernoulli distributions with means $\mu_1$ and $\mu_2$ ($\mu_1 > \mu_2$). We consider arm $2$ as the target arm. We take $T = 10,000, \delta = 0.1, \Delta_0 = 0.1$.  We set $\mu_1 = 0.85$ and vary the value of $\mu_2$ from $0.03$ to $0.15$.
We compare \Cref{alg:UCB} with a modified algorithm from \cite{jun2018adversarial}, which attacks the non-target arms whenever its calculated $\alpha_t > 0$ and the pre-attack feedback is one. Notice that there is no theoretical guarantee for this modified algorithm. We show the costs and target arm chosen times relative to a trivial baseline, which sets the post-attack feedback of all non-target arms to zero as long as the pre-attack feedback is one.
Figure~\ref{fig:2arm} shows that our algorithm outperforms the modified previous algorithm: it pays fewer costs to achieve more target arm pulls.

\begin{figure}[t]
	\centering
	\begin{subfigure}[b]{0.3\textwidth}
		\centering
		\includegraphics[width=\textwidth]{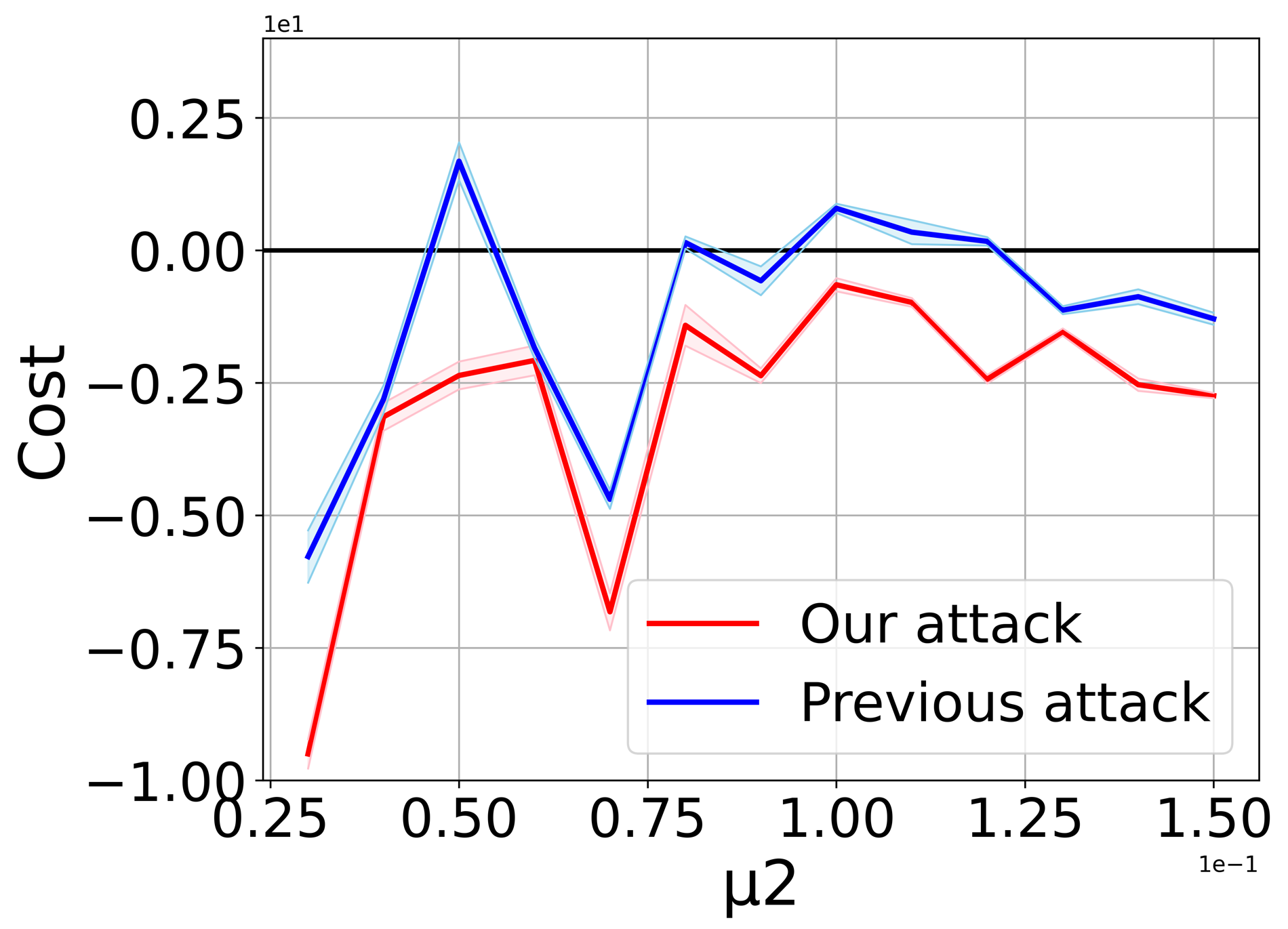}
		\caption{Relative Cost}
		\label{fig:2arm_Cost}
	\end{subfigure}
	\begin{subfigure}[b]{0.3\textwidth}
    	\centering
    	\includegraphics[width=\textwidth]{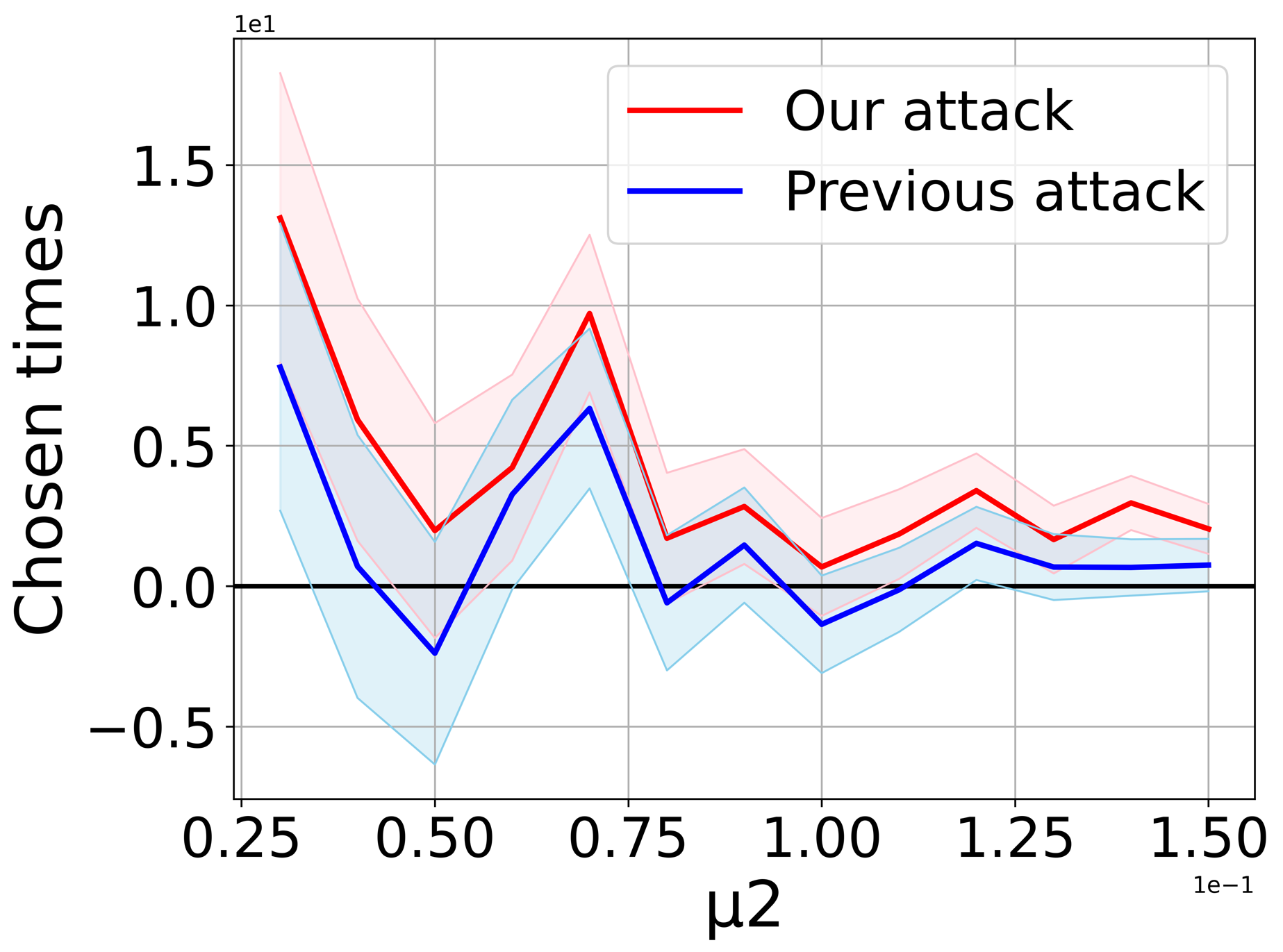}
    	\caption{Relative chosen times}
    	\label{fig:2arm_Times}
	\end{subfigure}
	\caption{Relative costs and target arm chosen times on 2-armed bandits with binary feedback.}
	\label{fig:2arm}
\end{figure}

Next, we study how the algorithm or problem instance parameters affect the performance of \Cref{alg:PBM-UCB}. We use $\epsilon = 0.1$ for the PBM-UCB algorithm. We take $L = 16, K = 8, T = 100,000$; $\{\mu_i\}_{i=1}^{L}$ are sampled from uniform distribution $U(0, 1)$ for Figure~\ref{fig:PBM_delta0}, Figure~\ref{fig:PBM_withAttack}, and sampled from $U(0, x)$ for \Cref{fig:PBM_Sigma}.
Figure~\ref{fig:PBM_delta0} shows that the cumulative cost decreases as $\Delta_0$ increases, which is consistent with our \Cref{thm:pbm}. Figure~\ref{fig:PBM_Sigma} shows that the cumulative cost increases as $x$ increases, which suggests that our algorithm pays more costs when $\Delta_a$ is large.
Figure~\ref{fig:PBM_withAttack} shows that our attack algorithm indeed misguides the PBM-UCB to choose the target arm $T-o(T)$ times.

\begin{figure}[t]
	\centering
	\begin{subfigure}[b]{0.3\textwidth}
    	\centering
		\includegraphics[width=\textwidth]{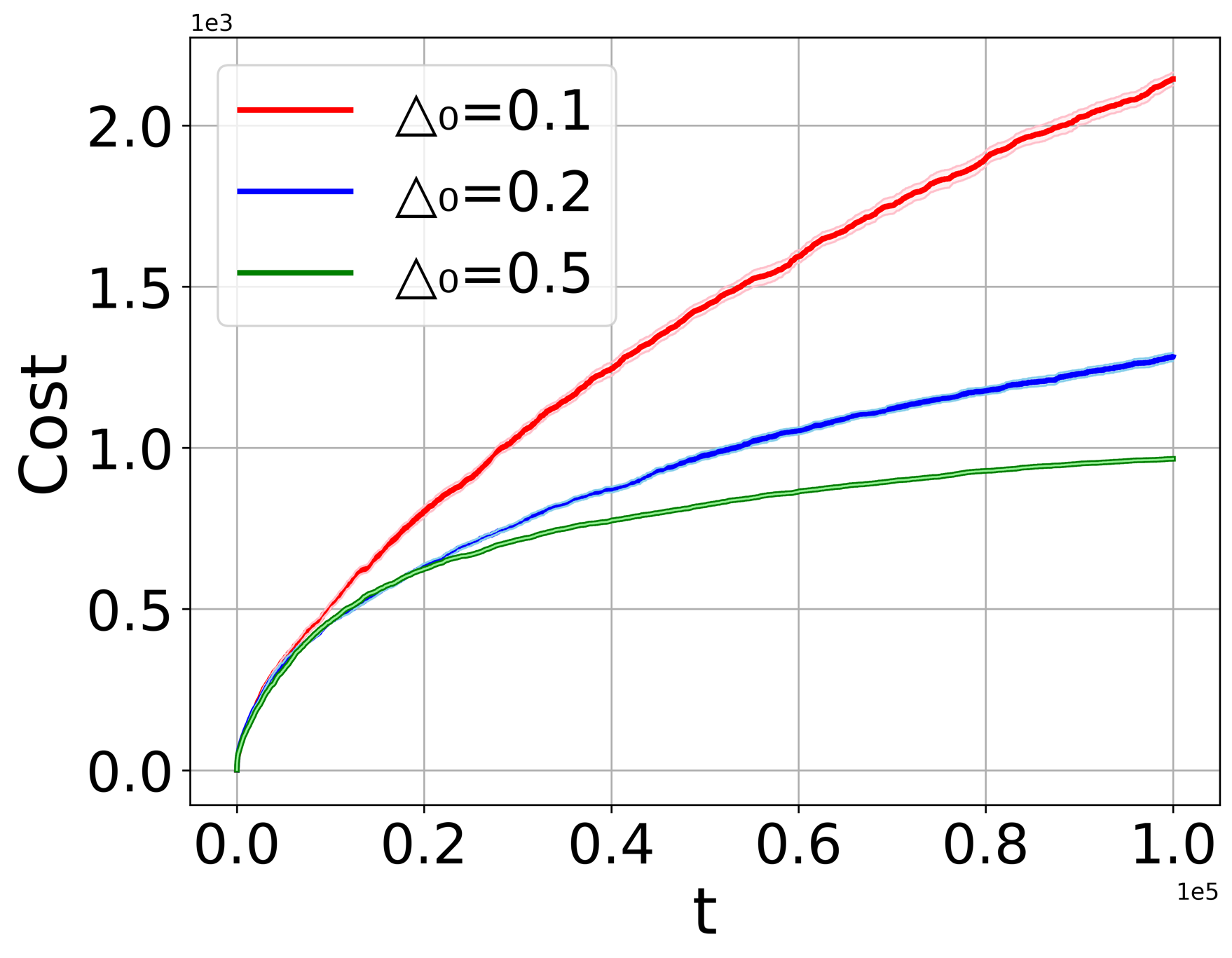}
		\caption{Cost as $\Delta_0$ varies}
		\label{fig:PBM_delta0}
	\end{subfigure}
	\begin{subfigure}[b]{0.3\textwidth}
    	\centering
    	\includegraphics[width=\textwidth]{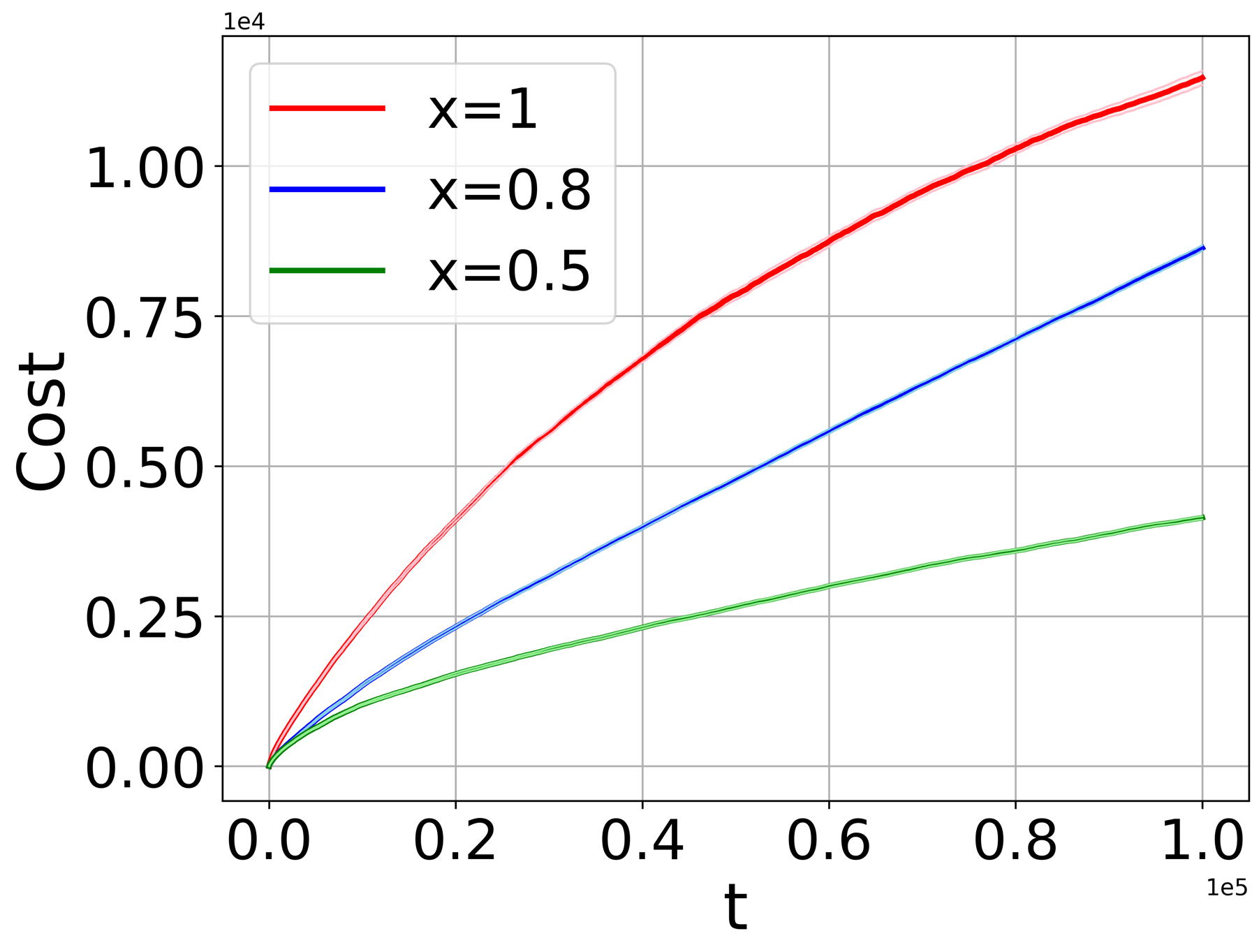}
    	\caption{Cost as $x$ varies}
    	\label{fig:PBM_Sigma}
	\end{subfigure}
        \begin{subfigure}[b]{0.3\textwidth}
    	\centering
    	\includegraphics[width=\textwidth]{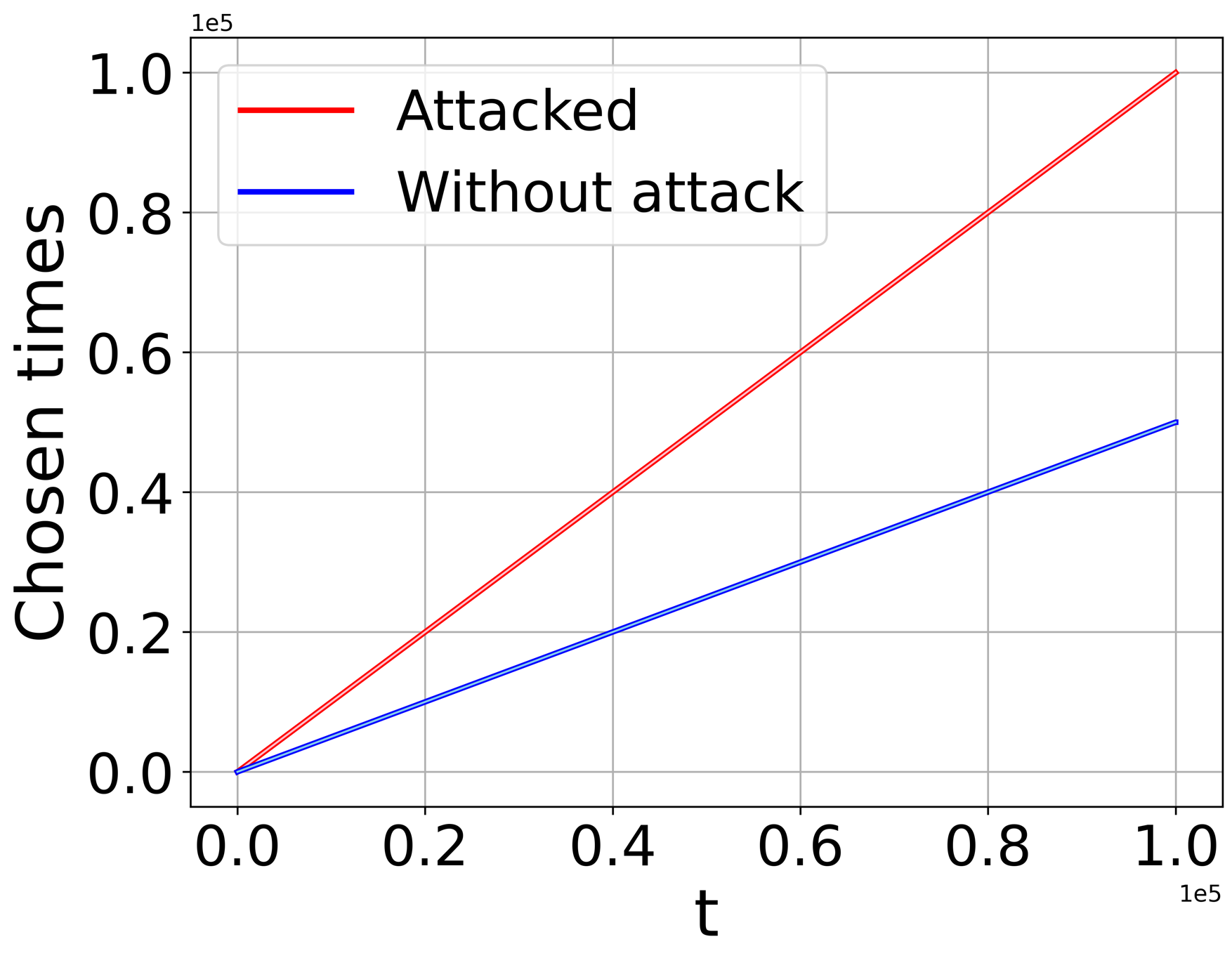}
    	\caption{Chosen times}
    	\label{fig:PBM_withAttack}
	\end{subfigure}
	\caption{Attacks against PBM-UCB with synthetic data.}
	\label{fig:PBM_Parameter}
\end{figure}

We also conduct experiments under the cascade model with synthetic data.
We take $L = 16, K = 8, T = 100,000$; $\{\mu_i\}_{i=1}^{L}$ are sampled from uniform distribution $U(0, 1)$.
We compare \Cref{alg:CascadeUCB} with the same baselines introduced in \Cref{sec:experiment}. Figure~\ref{fig:Cascade} shows that $\texttt{Trivial}_{1}$ algorithm suffers an extremely high cumulative cost, while it cannot misguide the agent to recommend the target arm very often. On the other hand, our algorithm and $\texttt{Trivial}_{K-1}$ algorithm can successfully attack CascadeUCB and perform similarly on the chosen ratio of the target arm. However, our algorithm pays about $30\%$ less cost than $\texttt{Trivial}_{K-1}$.
\begin{figure}[t]
	\centering
	\begin{subfigure}[b]{0.3\textwidth}
		\centering
		\includegraphics[width=\textwidth]{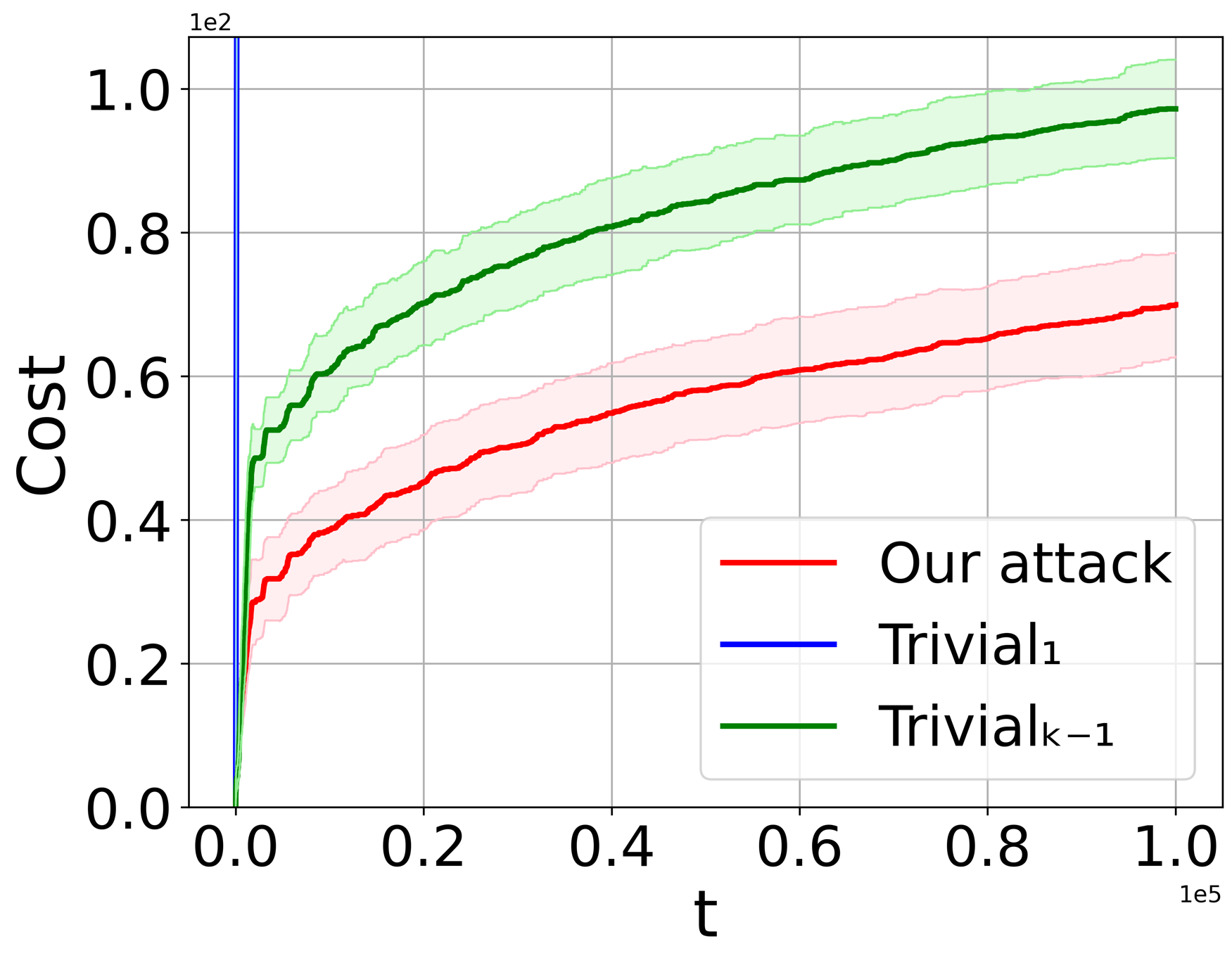}
		\caption{Cost}
		\label{fig:Cascade_Cost}
	\end{subfigure}
	\begin{subfigure}[b]{0.3\textwidth}
    	\centering
    	\includegraphics[width=\textwidth]{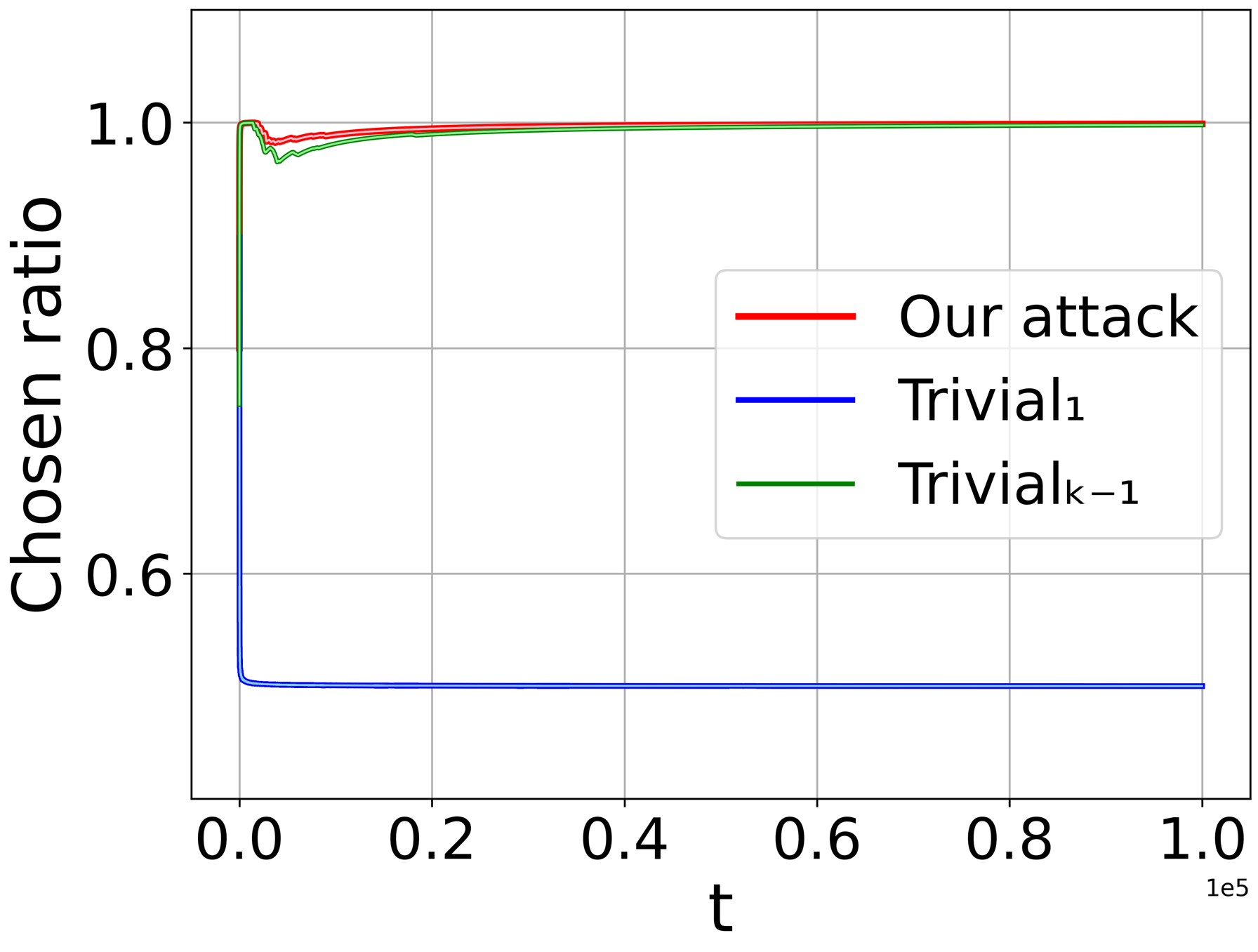}
    	\caption{Chosen times}
    	\label{fig:Cascade_Ratio}
	\end{subfigure}
	\caption{Attacks against CascadeUCB with synthetic data.}
	\label{fig:Cascade}
\end{figure}

\subsection{Real Data}
As discussed in \Cref{sec:experiment}, we have shown results under the position-based model with real data. We now show the experimental results with real data (MovieLens 20M dataset) under the cascade model. We take $L = 100, K = 10, T = 100,000$; $\{\mu_i\}_{i=1}^{L}$ are extracted according to on~\cite{li2019online}. Similarly, we compare the chosen ratio and cumulative cost of our algorithm with $\texttt{Trivial}_{1}$ and $\texttt{Trivial}_{K-1}$.
Figure~\ref{fig:Cascade_realdata} shows that $\texttt{Trivial}_{1}$ cannot successfully attack CascadeUCB as the chosen ratio of the target arm is very low and it suffers a linear cost. \Cref{alg:CascadeUCB} and $\texttt{Trivial}_{K-1}$ have similar performance on the chosen ratio of the target arm. However, our algorithm dramatically decreases the cost by more than $50\%$, which indicates that it is more effective than $\texttt{Trivial}_{K-1}$.
\begin{figure}[t]
	\centering
	\begin{subfigure}[b]{0.3\textwidth}
		\centering
		\includegraphics[width=\textwidth]{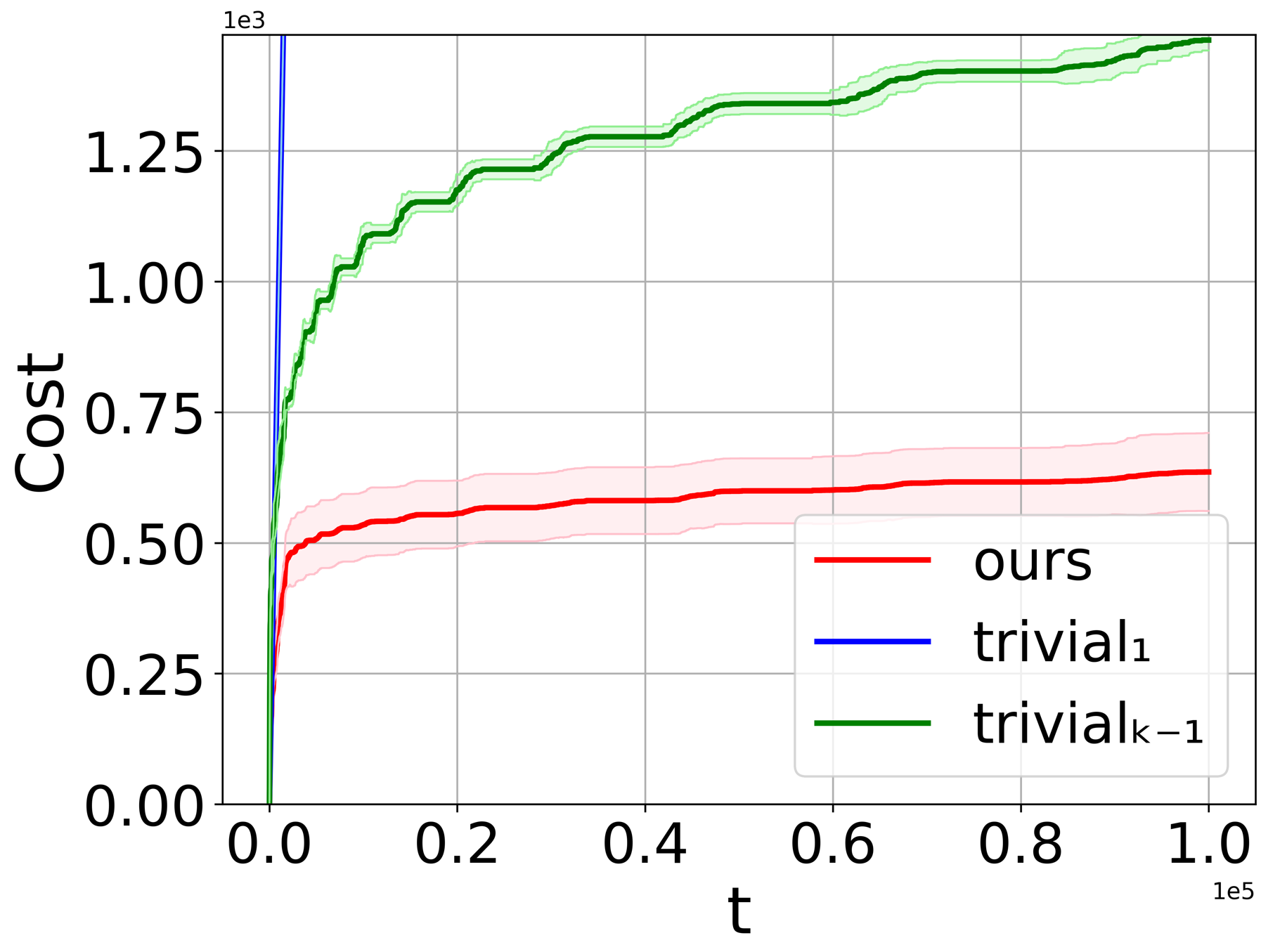}
		\caption{Cost}
		\label{fig:Cascade_Cost_real}
	\end{subfigure}
	\begin{subfigure}[b]{0.3\textwidth}
    	\centering
    	\includegraphics[width=\textwidth]{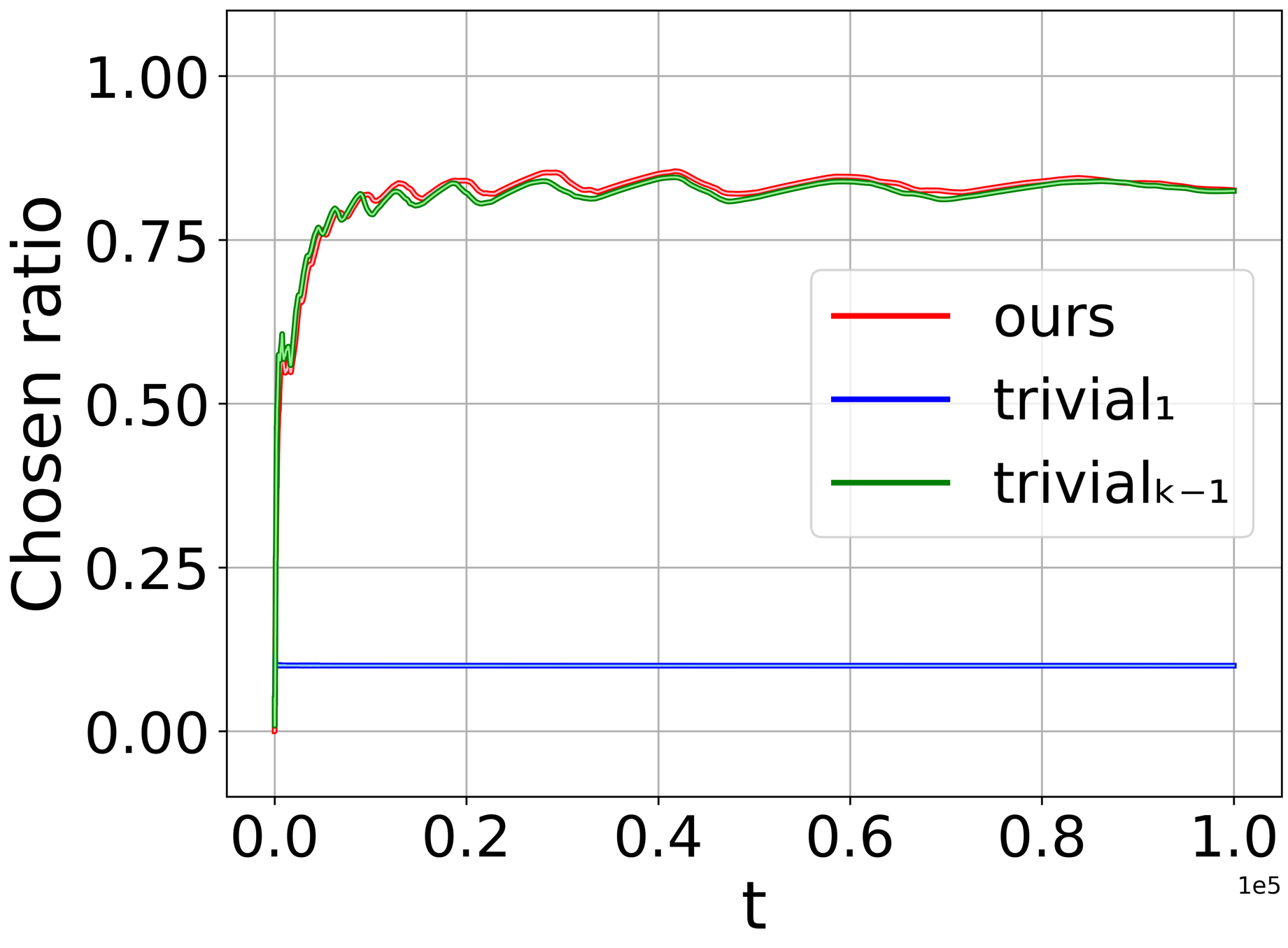}
    	\caption{Chosen times}
    	\label{fig:Cascade_Ratio_real}
	\end{subfigure}
	\caption{Attacks against CascadeUCB with real data.}
	\label{fig:Cascade_realdata}
\end{figure}

\end{document}